\documentclass[11pt]{article}

\usepackage[preprint]{jmlr2e}

\ShortHeadings{Data-Dependent Worst-Case Generalization Bounds}{Leroux, Marcus and Roger}
\firstpageno{1}

\usepackage{amsmath}
\usepackage{mathtools}
\usepackage{booktabs}
\usepackage{array}
\usepackage{tabularx}
\usepackage{tikz}
\usetikzlibrary{positioning, arrows.meta, shapes.geometric, fit, backgrounds, calc}
\usepackage{enumitem}
\usepackage{xcolor}

\newtheorem{definition}{Definition}
\newtheorem{remark}{Remark}

\newtheorem{assumption}{Assumption}

\newcommand{\R}{\mathbb{R}}
\newcommand{\E}{\mathbb{E}}
\newcommand{\Prob}{\mathbb{P}}
\newcommand{\KL}{\mathrm{KL}}
\newcommand{\Rad}{\mathrm{Rad}}
\newcommand{\dimM}{\overline{\dim}_M}
\newcommand{\dimMlow}{\underline{\dim}_M}
\newcommand{\dimB}{\overline{\dim}_B}
\newcommand{\dimPH}{\dim_{PH}}
\newcommand{\PMag}{\mathrm{PMag}}
\newcommand{\Mag}{\mathrm{Mag}}
\newcommand{\W}{\mathcal{W}}
\newcommand{\WS}{\W_{S}}
\newcommand{\WSU}{\W_{S,U}}
\newcommand{\Zspace}{\mathcal{Z}}
\newcommand{\Xspace}{\mathcal{X}}
\newcommand{\Yspace}{\mathcal{Y}}
\newcommand{\Eset}{\mathcal{E}}
\newcommand{\Esigma}{\mathfrak{E}}
\newcommand{\Empirical}{\widehat{\mathcal{R}}}
\newcommand{\Risk}{\mathcal{R}}
\newcommand{\GS}{G_{S}}

\begin{document}

\title{A Survey on Data-Dependent Worst-Case Generalization Bounds}

\author{%
\name Hubert Leroux \email hubert.leroux@polytechnique.org \\
\addr \'Ecole polytechnique\\Palaiseau, France
\AND
\name Jean Marcus \email jean.marcus@polytechnique.org \\
\addr \'Ecole polytechnique\\Palaiseau, France
\AND
\name Julien Roger \email julien.roger@polytechnique.org \\
\addr \'Ecole polytechnique\\Palaiseau, France
}

\editor{}

\maketitle

\begin{abstract}
Deep neural networks generalize well despite being heavily over\-pa\-ra\-me\-te\-rized,
in apparent contradiction with classical learning theory based on uniform
convergence over fixed hypothesis spaces. Uniform bounds over the entire
parameter space are vacuous in this regime, and recent work has shown that
non-vacuous guarantees can be recovered by restricting attention to the part
of parameter space that the algorithm actually visits. This survey paper organizes
this line of work around three steps: extending PAC-Bayesian theory to random,
data-dependent hypothesis sets \citep{dupuis2024uniform}; refining the
complexity term with geometric and topological descriptors of the optimization
trajectory, including fractal dimensions, $\alpha$-weighted lifetime sums,
and positive magnitude
\citep{simsekli2020hausdorff, dupuis2023fractal, andreeva2024topological};
and replacing the resulting information-theoretic terms by stability
assumptions \citep{tuci2026stability}. We unify these contributions around
a single template inequality and a head-to-head comparison of the resulting
bounds.
\end{abstract}

\begin{keywords}
Deep Learning, Generalization bounds, PAC-Bayes, Data-dependent hypothesis
sets, Algorithmic stability, Topological complexity.
\end{keywords}

\section{Introduction}\label{sec:intro}

Modern deep learning models routinely contain billions of parameters, far
more than the number of training samples, and yet they generalize well to
unseen data. This contradicts the classical bias-variance trade-off, under
which generalization error is expected to grow with model capacity
\citep{anthony2009neural}.

We work in the standard supervised setting. Let $z = (x,y) \in \Zspace =
\Xspace \times \Yspace$ be a data point drawn from an unknown distribution
$\mu_z$, and let $w \in \W \subset \R^d$ parameterize a predictor through a
loss $\ell : \R^d \times \Zspace \to \R$. The learner aims to minimize the
\emph{population risk} $\Risk(w) := \E_{z \sim \mu_z}[\ell(w,z)]$, but,
$\mu_z$ being unknown, only has access to the \emph{empirical risk}
\[
  \Empirical_S(w) := \frac{1}{n}\sum_{i=1}^n \ell(w,z_i),
  \qquad S = (z_1,\dots,z_n) \sim \mu_z^{\otimes n}.
\]
Generalization is then quantified by the \emph{uniform generalization gap}
over a hypothesis set $\W$,
\begin{equation}\label{eq:gen-gap}
  \GS(\W) := \sup_{w \in \W} \bigl(\Risk(w) - \Empirical_S(w)\bigr),
\end{equation}
and bounding this quantity when $\W$ is the full parameter space gives
vacuous guarantees in deep learning \citep{zhang2016understanding,
nagarajan2019uniform}. The crucial observation behind recent progress is
that a learning algorithm does not visit the entire parameter space: it
traces a specific, data-dependent trajectory $\WS \subset \W$ (see Figure~\ref{fig:vacuity}) that depends
on the dataset, the optimizer, and the hyper-parameters. Replacing $\W$ by
$\WS$ in \eqref{eq:gen-gap} captures the part of $\R^d$ the algorithm
actually visits, rather than the worst-case complexity of every model that
could have been trained. Several recent works follow this path
\citep{simsekli2020hausdorff, dupuis2023fractal, dupuis2024uniform,
andreeva2024topological, tuci2026stability}, but use different formalisms:
fractal geometry, PAC-Bayesian theory, and algorithmic stability.

\begin{figure}[t]
  \centering
  \includegraphics[width=0.55\textwidth]{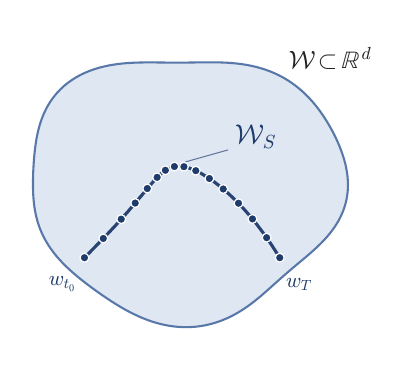}
  \caption{The data-dependent hypothesis set $\WS$. The ambient parameter
  space $\W \subset \R^d$ is high-dimensional, but a learning algorithm
  only visits a thin, data-dependent subset $\WS$ along its optimization
  trajectory. The bounds in \S\ref{sec:pac-bayes}--\S\ref{sec:stability}
  control $\GS(\WS)$ rather than $\GS(\W)$, exploiting the fact that
  $\WS$ has much lower effective complexity than $\W$ itself.}
  \label{fig:vacuity}
\end{figure}

\subsection{Anatomy of a data-dependent bound}\label{sec:anatomy}

A useful observation, made informally in \citet{andreeva2024topological},
is that almost all data-dependent bounds discussed below share the same
\emph{template}: with probability at least $1-\zeta$,
\begin{equation}\label{eq:template}
  \GS(\WS) \;\lesssim\;
  \sqrt{\,\frac{\;
    \underbrace{\mathrm{Complexity}(\WS)}_{\text{geometry of trajectory}}
    \;+\;
    \underbrace{\mathrm{IT}(S, \WS)}_{\text{data dependence}}
    \;+\;
    \log(1/\zeta)\,}{n}\,}.
\end{equation}
Each of the three steps mentioned above acts on a different term (see Figure~\ref{fig:roadmap}). PAC-Bayes
on random sets (\S\ref{sec:pac-bayes}) provides the foundation that makes
\eqref{eq:template} possible. Topological complexity (\S\ref{sec:topology})
refines the \emph{Complexity} term, replacing covering numbers with
geometrically informative quantities. Algorithmic stability
(\S\ref{sec:stability}) removes the \emph{IT} term, which is intractable for
practical optimizers. Figure~\ref{fig:roadmap} summarizes the structure of
the survey, and Section~\ref{sec:synthesis} closes with a head-to-head
comparison of the bounds reached at each stage (Table~\ref{tab:bounds}).

\begin{figure}[t]
\centering
\begin{tikzpicture}[
  node distance=10mm and 6mm,
  every node/.style = {font=\small},
  block/.style = {
    rectangle, rounded corners=2pt, draw, thick,
    minimum height=10mm, minimum width=30mm, align=center,
    fill=blue!5
  },
  problem/.style = {
    rectangle, rounded corners=2pt, draw, thick,
    minimum height=10mm, minimum width=30mm, align=center,
    fill=red!10
  },
  movement/.style = {
    rectangle, draw=none, align=center,
    font=\small\itshape, text=blue!60!black
  },
  arrow/.style = {-{Stealth[length=2.5mm]}, thick}
]

\node[problem] (vacuous) {Vacuous bound\\over fixed $\W$};
\node[movement, right=of vacuous] (m1) {PAC-Bayes\\on random sets};
\node[block, right=of m1] (datadep) {Data-dependent\\bound on $\WS$};
\node[movement, below=of datadep] (m2) {Topological\\complexity};
\node[block, below=of m2] (refined) {Refined\\complexity term};
\node[movement, left=of refined] (m3) {Algorithmic\\stability};
\node[block, left=of m3, fill=green!10] (final) {IT-free,\\computable bound};

\draw[arrow] (vacuous) -- (m1);
\draw[arrow] (m1) -- (datadep);
\draw[arrow] (datadep) -- (m2);
\draw[arrow] (m2) -- (refined);
\draw[arrow] (refined) -- (m3);
\draw[arrow] (m3) -- (final);

\node[font=\scriptsize, below=1pt of m1] {\S\ref{sec:pac-bayes}};
\node[font=\scriptsize, right=1pt of m2] {\S\ref{sec:topology}};
\node[font=\scriptsize, below=1pt of m3] {\S\ref{sec:stability}};

\end{tikzpicture}
\caption{Roadmap of the survey. Each step acts on a different term of the
template inequality~\eqref{eq:template}. \S\ref{sec:fail} establishes the
starting point (left, red); \S\ref{sec:synthesis} closes with a head-to-head
comparison of the bounds reached at each stage.}
\label{fig:roadmap}
\end{figure}

\section{Preliminaries}\label{sec:prelim}

We fix notation, list the assumptions used in later sections, and define the
complexity measures that appear in our bounds. Theorems below refer back to
these by number, so that proof statements remain compact.

\subsection{Notation}

We write $\W \subset \R^d$ for a generic parameter space. A \emph{fixed}
hypothesis set is denoted $\W$; a \emph{random}, data-dependent set is
denoted $\WS$ (depending on the dataset $S$) or $\WSU$ (depending on $S$
and on the algorithm's randomness $U$). The data $S = (z_1,\dots,z_n)$ is
i.i.d.\ from $\mu_z$ on $\Zspace$. We use $\Risk$, $\Empirical_S$, $\GS$
as defined in \S\ref{sec:intro}. For a measurable function $\phi$ on a product
space, $\E_S$ denotes expectation over $S \sim \mu_z^{\otimes n}$.

\subsection{Standing assumptions}\label{sec:assumptions}

The seven assumptions below appear in various combinations throughout the
paper, and we gather them here once so that later theorems can easily refer to them
by number. The first four are regularity conditions
on the loss and on the hypothesis sets, used in
\S\ref{sec:fail}--\S\ref{sec:pac-bayes}.

\begin{assumption}[Bounded loss]\label{ass:bounded}
$\ell : \R^d \times \Zspace \to \R$ is measurable and $\ell(w,z) \in [0,B]$
for some $B > 0$.
\end{assumption}

\begin{assumption}[Supremum measurability]\label{ass:sup-meas}
For any coefficients $b, a_1, \dots, a_n \in \R$, the map
\[
  (\W, S) \longmapsto \sup_{w \in \W}\sum_{i=1}^n
    \bigl(a_i\, \ell(w,z_i) - b\,\Risk(w)\bigr)
\]
is jointly measurable with respect to the product $\sigma$-algebra
$\Esigma \otimes \mathcal{F}^{\otimes n}$ (notation as in
\S\ref{sec:random-sets}).
\end{assumption}

\begin{assumption}[Measurable covering numbers]\label{ass:meas-cover}
The covering numbers (Definition~\ref{def:covering}) used in this paper are
measurable with respect to $\mathcal{F}^{\otimes n} \otimes \Esigma$.
\end{assumption}

To rigorously transition from finite-scale covering numbers to asymptotic fractal dimensions in later bounds, we require the finite-resolution approximation of the dimension to converge uniformly with respect to the dataset size $n$.

\begin{assumption}[Uniformity in probability]\label{ass:uniformity}
For all $\epsilon > 0$,
\[
  \sup_{n \in \mathbb{N}^\star}
  \int_{\Zspace^n}
    Q_S\!\left(
      \sup_{0<r<\delta}
        \frac{\log |N^{\vartheta_S}_r(\WS)|}{\log(1/r)}
        \;-\; \dimB^{\vartheta_S}(\WS) \geq \epsilon
    \right) d\mu_z^{\otimes n}(S)
  \;\xrightarrow[\delta\to 0]{}\; 0.
\]
\end{assumption}

The remaining three assumptions belong to a different family: they
constrain how the \emph{algorithm} reacts to a small perturbation of the
training set. They are used only in \S\ref{sec:stability}, where they
replace the information-theoretic terms that appear in earlier bounds.

The notion of uniform stability was originally introduced by
\citet{bousquet2002stability}; the argument-stability variant we use here
follows \citet{bassily2020stability}.

\begin{assumption}[Uniform argument stability,
\citealp{bassily2020stability}]\label{ass:uas}
A randomized algorithm $A : (S,U) \mapsto \R^d$ is $\beta$-uniformly
argument-stable if for all datasets $S, S'$ differing in at most one element,
$\E_U[\|A(S,U) - A(S',U)\|] \leq \beta$.
\end{assumption}

While Assumption~\ref{ass:uas} characterizes the stability of a single algorithmic output, bounding the worst-case generalization over an entire optimization trajectory requires lifting this standard notion to a property of the data-dependent hypothesis set itself.

\begin{assumption}[Trajectory stability,
\citealp{tuci2026stability}]\label{ass:traj}
The algorithm is $\beta_n$-trajectory-stable if for any $J \in \mathbb{N}^\star$
and any data-dependent selection $\omega : \WSU \times \Zspace^n \to w \in
\WSU$, there exists a measurable map $\omega'$ such that for all $S, S'$
differing in $J$ points and all $z$,
\[
  \E_U\bigl[|\ell(\omega(\WSU,S),z) - \ell(\omega'(\mathcal{W}_{S',U},\omega(\WSU,S)),z)|\bigr]
  \;\leq\; \beta_n J.
\]
\end{assumption}

\begin{assumption}[Lipschitz on random sets]\label{ass:lip-rs}
There exists a measurable map $(S,U) \mapsto L_{S,U}$ such that
$|\ell(w,z) - \ell(w',z)| \leq L_{S,U}\|w - w'\|$ for all $w,w' \in \WSU$
and $z$.
\end{assumption}

\begin{table}[t]
\centering
\small
\begin{tabular}{@{}llll@{}}
\toprule
\# & Name & Statement & Used in \\
\midrule
A1 & Bounded loss & $\ell \in [0,B]$ & \S\ref{sec:fail}, \S\ref{sec:pac-bayes}, \S\ref{sec:topology}, \S\ref{sec:stability} \\
A2 & Sup-measurability & joint measurability of $\sup$ & \S\ref{sec:pac-bayes} \\
A3 & Measurable covering & $N_\delta(\W)$ measurable & \S\ref{sec:pac-bayes} \\
A4 & Uniformity in probability & uniform conv.\ in probability & \S\ref{sec:pac-bayes} (fractal) \\
A5 & Uniform argument stability & $\beta$-stable algorithm & \S\ref{sec:stability} \\
A6 & Trajectory stability & $\beta_n$-stable random set & \S\ref{sec:stability} \\
A7 & Lipschitz on $\WSU$ & $L_{S,U}$ random Lipschitz constant & \S\ref{sec:stability} \\
\bottomrule
\end{tabular}
\caption{Summary of standing assumptions used throughout the survey.}
\label{tab:assumptions}
\end{table}

Table~\ref{tab:assumptions} summarizes the seven assumptions together with
the sections in which each is invoked.

\subsection{Complexity measures at a glance}\label{sec:complexities}

The bounds we are about to discuss control the generalization gap through
quantities measuring, in one form or another, the ``size'' or ``richness''
of the hypothesis set. These come in three flavours and are introduced
below in increasing geometric sophistication: covering-based quantities
that count points at a given resolution, fractal dimensions that capture
how that count scales with resolution, and topological or magnitude-based
descriptors that exploit the internal structure of the trajectory.
Table~\ref{tab:complexity} gives a high-level summary.

We begin with the covering-based family, which underlies most classical
bounds. A covering of $\W$ at scale $\delta$ is a finite set of $\delta$-balls
whose union contains $\W$; the smaller $\delta$, the finer the cover.

\begin{definition}[Pseudometric space and covering number]\label{def:covering}
$(\Xspace, \vartheta)$ is a \emph{pseudometric space} if $\vartheta$ is symmetric,
satisfies the triangle inequality, and $\vartheta(x,x)=0$. For a compact
$(\Xspace,\vartheta)$ and $\delta > 0$, $N^{\vartheta}_\delta(\Xspace)$ is the minimal
set of centers covering $\Xspace$ by closed $\delta$-balls; $|N^{\vartheta}_\delta(\Xspace)|$
is the corresponding \emph{covering number}. We omit $\vartheta$ when it is
the Euclidean distance on $\R^d$.
\end{definition}

The fractal family captures how $|N^\vartheta_\delta(\W)|$ scales as
$\delta \to 0$. Sets that look low-dimensional at small scales have small
Minkowski dimension, which makes them ``effectively'' smaller than the
ambient dimension $d$ even when they live in $\R^d$.

\begin{definition}[Minkowski (box-counting) dimension]\label{def:minkowski}
For $\W \subset \R^d$,
\[
  \dimMlow \W := \liminf_{\delta\to 0}\frac{\log N_\delta(\W)}{-\log \delta},
  \qquad
  \dimM \W := \limsup_{\delta\to 0}\frac{\log N_\delta(\W)}{-\log \delta}.
\]
When the two coincide, the common value is $\dim_M \W$. The hierarchy
$0 \leq \dim_H \W \leq \dimMlow \W \leq \dimM \W \leq d$ holds, where $\dim_H$
is the Hausdorff dimension.
\end{definition}

\begin{definition}[Persistent-homology dimension,
\citealp{schweinhart2020fractal}]\label{def:ph-dim}
Let $(X, \rho)$ be a finite pseudometric space, $T$ a minimum spanning tree,
and $\alpha \geq 0$. The \emph{$\alpha$-weighted lifetime sum} is
\[
  E^\rho_\alpha(X) := \sum_{e \in T} |e|^\alpha.
\]
For a compact $(A,\rho)$,
$\dimPH^\rho(A) := \inf\{\alpha \geq 0 : \exists C>0,\ \forall Y\subset A
\text{ finite},\ E^\rho_\alpha(Y) \leq C\}$.
\end{definition}

The third family is topological and geometric. Unlike covering numbers,
which are insensitive to how points within a cover are arranged, these
quantities respond to the internal structure of $\W$: clustered iterates
contribute short MST edges and small magnitude, whereas iterates that
spread out across $\R^d$ contribute long edges and large magnitude
(Figures~\ref{fig:trajectory} and ~\ref{fig:magnitude} illustrate this on a simulated trajectory).

\begin{definition}[$\alpha$-weighted lifetime sums]\label{def:lifetime}
We use $E^\rho_\alpha(X)$ as a stand-alone complexity measure, not only as a
proxy for $\dimPH^\rho$.
\end{definition}

\begin{definition}[Magnitude, \citealp{leinster2013magnitude}]\label{def:magnitude}
Let $(X,\rho)$ be finite and $s>0$. A \emph{weighting} is $\beta : X \to \R$
satisfying $\sum_{b \in X} e^{-s\rho(a,b)}\beta(b) = 1$ for all $a \in X$. The
magnitude is $\Mag^\rho(sX) := \sum_a \beta(a)$.
\end{definition}

\begin{definition}[Positive magnitude,
\citealp{andreeva2024topological}]\label{def:posmag}
With $\beta_s$ the weighting above,
$\PMag^\rho(sX) := \sum_a \beta_s(a)^+$, where $x^+ := \max(x,0)$.
\end{definition}

Finally we record three quantities that are not complexities of $\W$
itself but appear repeatedly in our bounds: the Rademacher complexity (a
statistical analogue of covering numbers), the total mutual information
$I_\infty$ (which measures how strongly $\WS$ depends on the data $S$),
and the Kullback--Leibler divergence (the data-dependence term in all
PAC-Bayesian bounds).

\begin{definition}[Empirical Rademacher complexity]\label{def:rad} 
The empirical Rademacher complexity is defined as: 
$\Rad_S(\W) = \E_\sigma\bigl[\sup_{w\in\W} \tfrac{1}{n}\sum_{i=1}^n
\sigma_i \ell(w,z_i)\bigr]$, where $\sigma_i$ are i.i.d.\ uniform on
$\{-1,+1\}$.
\end{definition}

\begin{definition}[Total mutual information]\label{def:itinfo}
For random elements $X,Y$ on a probability space $(\Omega,\mathcal{F},\Prob)$,
\[
  I_\infty(X,Y) := \log \sup_A \frac{\Prob_{X,Y}(A)}{\Prob_X \otimes \Prob_Y(A)}.
\]
\end{definition}

\begin{definition}[Kullback-Leibler divergence]\label{def:kl}
For probability measures $\mu \ll \nu$ on $\W$,
$\KL(\mu \| \nu) := \int \log(d\mu/d\nu)\, d\mu$, with $\KL(\mu\|\nu) = +\infty$
otherwise.
\end{definition}

\begin{table}[t]
\label{tab:complexity}
\centering
\small
\begin{tabular}{@{}llll@{}}
\toprule
Quantity & Family & What it captures & Reference \\
\midrule
$|N^\vartheta_\delta(\W)|$ & metric & cover at scale $\delta$ & -- \\
$\Rad_S(\W)$ & statistical & ability to fit noise & \citet{mohri2018foundations} \\
$\dim_M(\W)$ & fractal & asymptotic covering scaling & \citet{simsekli2020hausdorff} \\
$\dimB^{\vartheta_S}(\WS)$ & fractal (data-dep.) & box-counting on random set & \citet{dupuis2024uniform} \\
$\dimPH^\rho(\W)$ & topological & persistent-homology dim. & \citet{schweinhart2020fractal} \\
$E^\rho_\alpha(\W)$ & topological & $\alpha$-weighted MST length & \citet{andreeva2024topological} \\
$\Mag^\rho(s\W)$ & geometric & effective ``volume'' & \citet{leinster2013magnitude} \\
$\PMag^\rho(s\W)$ & geometric & positive part of magnitude & \citet{andreeva2024topological} \\
\bottomrule
\end{tabular}
\caption{Complexity measures used in this paper.}
\end{table}

\section{Why uniform bounds over the whole parameter space fail}\label{sec:fail}

This section establishes the starting point of the survey. Classical bounds
based on covering numbers and on Rademacher complexity, applied to a fixed
$\W$, are textbook results, but they are vacuous in the deep-learning regime
\citep{zhang2016understanding, nagarajan2019uniform}: when $\W = \R^d$ and
$d$ is large, the right-hand sides explode. \citet{nagarajan2019uniform}
prove this is not an artifact of looseness: \emph{any} uniform bound over a
fixed, data-independent set must be vacuous in deep learning, because such
sets contain models that simply memorize random noise. We include the two
classical bounds below so that the rest of the paper can be read as a
reaction to their failure.

\subsection{Covering-based bound for fixed $\W$}

We first relate the generalization gap to the Minkowski dimension of a
fixed, finite-diameter $\W$.

\begin{theorem}[\citealp{simsekli2020hausdorff}, Theorem~1]
\label{thm:minkowski}
Under Assumption~\ref{ass:bounded}, suppose $\ell$ is $L$-Lipschitz in $w$ and
$\W \subset \R^d$ has finite diameter. For $n$ large enough, with probability
at least $1 - \zeta$ over $S \sim \mu_z^{\otimes n}$,
\begin{equation}\label{eq:thm-minkowski}
  \GS(\W) \;\leq\;
    B\sqrt{\frac{2\,\dimM(\W)\log(nL^2)}{n} + \frac{\log(1/\zeta)}{n}}.
\end{equation}
\end{theorem}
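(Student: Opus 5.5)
We follow the classical discretization-plus-union-bound route: cover $\W$ at a resolution $\delta$ tied to $n$, apply Hoeffding at each center, and pay for the approximation error with the Lipschitz constant.

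\emph{Step 1 (discretize).} Since $\W$ has finite diameter it is totally bounded, so $N_\delta(\W)$ is finite for every $\delta>0$. Fix $\delta>0$ and let $N_\delta(\W)=\{w_1,\dots,w_N\}$ be a minimal $\delta$-cover. For any $w\in\W$, choose $w_j$ with $\|w-w_j\|\le\delta$. Lipschitzness of $\ell$ gives both $|\Risk(w)-\Risk(w_j)|\le L\delta$ and $|\Empirical_S(w)-\Empirical_S(w_j)|\le L\delta$. Hence
\[
\GS(\W)\le \max_{j\le N}\bigl(\Risk(w_j)-\Empirical_S(w_j)\bigr)+2L\delta .
\]

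\emph{Step 2 (finite union bound).} The set $\W$ is fixed, so the centers do not depend on $S$. Each $\ell(w_j,z_i)\in[0,B]$ by Assumption~\ref{ass:bounded}. Hoeffding's inequality and a union bound over the $N$ centers then give, with probability at least $1-\zeta$,
\[
\max_j\bigl(\Risk(w_j)-\Empirical_S(w_j)\bigr)\le B\sqrt{\frac{\log N_\delta(\W)+\log(1/\zeta)}{2n}}.
\]

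\emph{Step 3 (pass to the dimension and tune $\delta$).} Take $\delta=1/\sqrt{nL^2}$, so that the discretization error is $2L\delta=2/\sqrt n$. By the definition of $\dimM$ as a $\limsup$, for $n$ large enough (so that $\delta$ is small) we have
\[
\log N_\delta(\W)\le (\dimM(\W)+\epsilon)\log(1/\delta)=\tfrac12(\dimM(\W)+\epsilon)\log(nL^2).
\]
Substituting gives $B\sqrt{\bigl(\tfrac14(\dimM+\epsilon)\log(nL^2)+\tfrac12\log(1/\zeta)\bigr)/n}+2/\sqrt n$.

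\emph{Main obstacle: absorbing the slack into the stated constants.} Two pieces of slack remain, namely the additive $2/\sqrt n$ and the $\epsilon$ in the dimension. The stated bound has constant $2$ in front of $\dimM\log(nL^2)$, whereas we produced only $1/4$. This spare room is what we use. Squaring, the term $2/\sqrt n$ contributes $4/n$ plus a cross term. Both are of lower order than $\tfrac74\,\dimM\log(nL^2)/n$ as soon as $\dimM(\W)>0$ and $n$ is large. The $\epsilon$ is handled the same way, by choosing e.g.\ $\epsilon=\dimM$. This is precisely where the hypothesis ``for $n$ large enough'' is spent.

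\emph{The degenerate case.} If $\dimM=0$, the room vanishes. One would then either use $\epsilon$-dependence with $n$ large relative to $B$, or note that $N_\delta$ grows subpolynomially. The latter still leaves the $O(1/\sqrt n)$ term to control. I would follow \citet{simsekli2020hausdorff} and treat constants loosely there, possibly with a rescaling of $B$; this is the step whose bookkeeping I am least sure reproduces the exact constant displayed in \eqref{eq:thm-minkowski}.
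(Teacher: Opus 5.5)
Your argument is the one the paper sketches (following \citet{simsekli2020hausdorff}). You use a $\delta$-net, discretization error $2L\delta$, Hoeffding plus a union bound over the net, then $\delta=1/(L\sqrt n)$ and the Minkowski scaling, with the loose constant $2$ paying for the slack. It is complete when $\dimM(\W)>0$. Take $\epsilon=\dimM(\W)$, so the spare room is $3/2$, not $7/4$. Set $a=\dimM(\W)\log(nL^2)$ and $u=\log(1/\zeta)$; your bound then reads $\GS(\W)\le n^{-1/2}\bigl(B\sqrt{(a+u)/2}+2\bigr)$. Since $\sqrt{2a+u}-\sqrt{(a+u)/2}\ge\tfrac14\sqrt{2a+u}$, this is at most $B\sqrt{(2a+u)/n}$ as soon as $\log(nL^2)\ge 32/(B^2\dimM(\W))$ and $1/(L\sqrt n)$ lies below the threshold coming from the $\limsup$. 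That threshold does not depend on $\zeta$, which settles the uniformity you left implicit when calling the cross term lower order.

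The case $\dimM(\W)=0$ is a genuine gap, and it cannot be closed by any bookkeeping: neither taking $n$ larger nor rescaling $B$ helps, because the right-hand side is then $B\sqrt{\log(1/\zeta)/n}$ and the inequality is false. Here is a counterexample.
\begin{itemize}
\item Take $\W=\{w_1,\dots,w_N\}$ with $\|w_j-w_k\|\ge 1$ for $j\neq k$.
\item Take data $z=(z^{(1)},\dots,z^{(N)})$ with i.i.d.\ Bernoulli$(1/2)$ coordinates.
\item Take $\ell(w,z)=B\sum_j(1-2\|w-w_j\|)^+z^{(j)}$, which takes values in $[0,B]$ and is $2B$-Lipschitz in $w$.
\end{itemize}
Then $\dimM(\W)=0$ and the gaps $\Risk(w_j)-\Empirical_S(w_j)$ are independent. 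By the central limit theorem, $\Prob\bigl(\GS(\W)\le B\sqrt{\log(1/\zeta)/n}\bigr)\to p_\zeta^N$, where $p_\zeta<1$ is the probability that an $\mathcal N(0,1/4)$ variable is at most $\sqrt{\log(1/\zeta)}$. For $N=20$ and $\zeta=1/2$ the limit is about $0.37<1-\zeta$, so the claim fails for every large $n$. You should therefore add the hypothesis $\dimM(\W)>0$, which the original argument uses implicitly when it absorbs the slack exactly as you do. Alternatively, keep the $2/\sqrt n$ and $\epsilon$ terms explicit in the bound.
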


\noindent\emph{Proof sketch.} Cover $\W$ with a finite $\delta$-net of size
$|N_\delta(\W)|$. Lipschitzness controls the discretization error by $2L\delta$.
A union bound combined with Hoeffding's inequality on the net, together with
the Minkowski-dimension scaling of $|N_\delta(\W)|$, yields the bound. See
the original reference for the full proof.

\subsection{Rademacher-based bound for fixed $\W$}

\begin{theorem}[\citealp{mohri2018foundations}, Theorem~3.3]
\label{thm:rademacher}
Under Assumption~\ref{ass:bounded}, for any $\zeta \in (0,1)$,
\begin{equation}
  \Prob_{S \sim \mu_z^{\otimes n}}\!
    \left(\GS(\W) \leq 2\Rad_S(\W) + 3B\sqrt{\frac{\log(1/\zeta)}{2n}}\right)
  \;\geq\; 1-\zeta.
\end{equation}
This slightly improved version is given by \citet[Theorem~1]{dupuis2024uniform}.
\end{theorem}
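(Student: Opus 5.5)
The plan is the standard symmetrization argument, run on the one-sided quantity $\GS(\W)$ with losses in $[0,B]$. Write $\Phi(S) := \GS(\W) = \sup_{w\in\W}(\Risk(w)-\Empirical_S(w))$, which is measurable by Assumption~\ref{ass:sup-meas} with fixed $\W$.

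\emph{Step 1: concentration of $\Phi$.} Replacing one sample $z_i$ by $z_i'$ changes each $\Empirical_S(w)$ by at most $B/n$, hence changes $\Phi$ by at most $B/n$. McDiarmid's inequality then gives, with probability at least $1-\zeta/2$,
\[
  \Phi(S)\le \E_S[\Phi(S)] + B\sqrt{\tfrac{\log(2/\zeta)}{2n}}.
\]

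\emph{Step 2: symmetrization.} Introduce a ghost sample $S'$ and write $\Risk(w)=\E_{S'}[\Empirical_{S'}(w)]$. Jensen's inequality (pulling the supremum outside the expectation) gives
\[
  \E_S\Phi\le\E_{S,S'}\sup_{w\in\W}\tfrac1n\sum_{i=1}^n\bigl(\ell(w,z_i')-\ell(w,z_i)\bigr).
\]
Swapping $z_i$ and $z_i'$ leaves the joint law unchanged, so we may insert Rademacher signs $\sigma_i$. Splitting the supremum then gives $\E_S\Phi\le 2\,\E_S[\Rad_S(\W)]$.

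\emph{Step 3: from expected to empirical Rademacher complexity.} $\Rad_S(\W)$ also has bounded differences $B/n$. A second McDiarmid application gives, with probability $1-\zeta/2$,
\[
  \E_S\Rad_S\le \Rad_S+B\sqrt{\tfrac{\log(2/\zeta)}{2n}}.
\]
A union bound then yields
\[
  \GS(\W)\le 2\Rad_S(\W)+3B\sqrt{\tfrac{\log(2/\zeta)}{2n}},
\]
which is the textbook form.

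\emph{The main obstacle.} The stated bound has $\log(1/\zeta)$ rather than $\log(2/\zeta)$, and this is the ``slight improvement'' attributed to \citet{dupuis2024uniform}. To remove the union-bound factor, I would apply McDiarmid only once, to the combined function $\Psi(S):=\Phi(S)-2\Rad_S(\W)$.

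By Step 2, $\E_S\Psi\le0$. The bounded-difference constant of $\Psi$ is at most $B/n+2B/n=3B/n$. McDiarmid then gives, with probability $1-\zeta$,
\[
  \Psi\le 3B\sqrt{\tfrac{\log(1/\zeta)}{2n}}.
\]
This is exactly the claimed inequality, with the constant $3B$ arising naturally.

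The one technical point to verify is the bounded-difference constant of $\Psi$. A single sample swap can move $\Phi$ and $2\Rad_S$ in opposite directions, so the crude triangle-inequality estimate $3B/n$ is what one must use, and it suffices. Measurability of $\Rad_S$ follows from the same assumption, since it is a finite average over sign patterns of suprema of the form in Assumption~\ref{ass:sup-meas}.
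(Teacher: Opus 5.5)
Your proposal is correct and matches the route the paper sketches: symmetrization, then a single careful McDiarmid step. You apply McDiarmid once to $\GS(\W) - 2\Rad_S(\W)$, whose expectation is nonpositive by symmetrization and whose bounded-difference constant is $3B/n$; this removes the union-bound factor and gives $\log(1/\zeta)$, and the same function and constant also produce the $\lambda\,9B^2/(8n)$ term in Theorem~\ref{thm:dd-rademacher}, so your Steps~1 and~3 can simply be dropped.
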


\noindent\emph{Proof sketch.} Symmetrization introduces Rademacher random
variables; a careful application of McDiarmid's inequality turns the
expectation bound into a high-probability bound.

\section{Restoring data-dependence: PAC-Bayes on random sets}\label{sec:pac-bayes}

The classical PAC-Bayesian framework partly addresses the failure of
\S\ref{sec:fail} by averaging over a posterior $Q_S$ instead of taking a
worst-case parameter, but it still operates on a fixed parameter space.
We present below the recent extension of \citet{dupuis2024uniform} that
allows the support itself to be a random set.

\subsection{Classical PAC-Bayes in one statement}\label{sec:classical-pb}

PAC-Bayesian theory bounds the average error of a posterior $Q_S$ relative
to a fixed prior $P$, blending Bayesian intuition with frequentist validity.
Several historical formulations exist: \citet{catoni2003pac} established a
tight bound under bounded loss, \citet{germain2009pac} gave a general
expectation-form result, and \citet{rivasplata2020pac} showed that the same
proof yields a \emph{disintegrated} bound that holds pointwise in
$w \sim Q_S$. We package the three under a single statement.

\begin{theorem}[\citealp{catoni2003pac, germain2009pac, rivasplata2020pac}]
\label{thm:classical-pb}
Let $\zeta \in (0,1)$ and $\phi : \W \times \Zspace^n \to \R$ be measurable
with $e^\phi \in L^1(P \otimes \mu_z^{\otimes n})$. Let $P$ be a prior and
$Q_S$ a Markov kernel with $Q_S \ll P$ and $\phi(\cdot,S) \in L^1(Q_S)$.
Then:
\begin{enumerate}[label=(\roman*), leftmargin=2.5em]
\item (\textbf{Expectation,} \citealp{germain2009pac}) With probability at least
$1-\zeta$ over $S \sim \mu_z^{\otimes n}$,
\begin{equation}\label{eq:germain}
  \E_{w\sim Q_S}[\phi(w,S)] \;\leq\;
    \KL(Q_S \| P) + \log\tfrac{1}{\zeta}
    + \log \E_S \E_{w\sim P}\!\bigl[e^{\phi(w,S)}\bigr].
\end{equation}
\item (\textbf{Disintegrated,} \citealp{rivasplata2020pac}) With probability at
least $1-\zeta$ over $(S, w \sim Q_S)$,
\begin{equation}\label{eq:rivasplata}
  \phi(w,S) \;\leq\; \log\tfrac{dQ_S}{dP}(w) + \log\tfrac{1}{\zeta}
    + \log \E_S \E_{w\sim P}\!\bigl[e^{\phi(w,S)}\bigr].
\end{equation}
\end{enumerate}
The earlier bound of \citet{catoni2003pac} is recovered by specializing
\eqref{eq:germain} to $\phi(w,S) = \lambda\bigl(\Risk(w) - \Empirical_S(w)\bigr)$
with $\lambda > 0$: under Assumption~\ref{ass:bounded}, the moment-generating
term is at most $\lambda^2 B^2/(8n)$ by Hoeffding, and dividing by $\lambda$
yields
$\E_{w\sim Q_S}[\Risk(w)] \leq \E_{w\sim Q_S}[\Empirical_S(w)]
+ (\KL(Q_S\|P) + \log(1/\zeta))/\lambda + \lambda B^2/(8n)$.
\end{theorem}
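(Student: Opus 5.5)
The engine behind both parts is the Donsker--Varadhan change of measure, combined with Markov's inequality applied to a single nonnegative random variable that depends only on $S$. Set
\[
  Z(S) := \E_{w\sim P}\bigl[e^{\phi(w,S)}\bigr],
\]
which is a measurable function of $S$ by Fubini, since $e^\phi \in L^1(P\otimes\mu_z^{\otimes n})$. Because $P$ does not depend on $S$, Fubini also gives $\E_S[Z(S)] = \E_S\E_{w\sim P}[e^{\phi(w,S)}] < \infty$. Markov's inequality then yields, with probability at least $1-\zeta$ over $S$,
\[
  Z(S) \le \E_S[Z(S)]/\zeta,
\]
that is, $\log Z(S) \le \log(1/\zeta) + \log \E_S\E_P[e^\phi]$. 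All of the data-independence of the prior is used in this one step; the rest is deterministic, for fixed $S$.

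For part (i), I fix $S$ in the good event and apply the Donsker--Varadhan variational inequality. For $Q_S\ll P$ with $\phi(\cdot,S)\in L^1(Q_S)$,
\[
  \E_{Q_S}[\phi] \le \KL(Q_S\|P) + \log \E_P[e^\phi].
\]
I would prove this quickly via Jensen. Writing $\tfrac{dQ_S}{dP}=q$, we have
\[
  \E_{Q_S}\bigl[\phi - \log q\bigr] \le \log \E_{Q_S}\bigl[e^{\phi}/q\bigr] \le \log \E_P[e^\phi],
\]
where the last inequality holds because the integral over $\{q>0\}$ is at most the full integral. If $\KL(Q_S\|P)=\infty$ there is nothing to prove. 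Otherwise the integrability hypotheses make the splitting $\E_{Q_S}[\phi]-\E_{Q_S}[\log q]$ legitimate. Combining this with the Markov step gives \eqref{eq:germain}.

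For part (ii), I would instead apply Markov to the joint variable. Consider
\[
  Y(S,w) := \exp\bigl(\phi(w,S) - \log q(w)\bigr)
\]
under $S\sim\mu_z^{\otimes n}$, $w\sim Q_S$, set to $0$ where $q=0$ (a $Q_S$-null set). Then
\[
  \E_S\E_{Q_S}[Y] = \E_S \int_{q>0} e^{\phi}\, dP \le \E_S\E_P[e^\phi].
\]
Markov then gives, with probability at least $1-\zeta$ over $(S,w)$, that $\phi(w,S) - \log q(w) \le \log(1/\zeta) + \log\E_S\E_P[e^\phi]$, which is \eqref{eq:rivasplata}. The only care needed is joint measurability of $(S,w)\mapsto q_S(w)$. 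I would take the Radon--Nikodym derivative of the joint measure $\mu_z^{\otimes n}\otimes Q_S$ with respect to $\mu_z^{\otimes n}\otimes P$ as a jointly measurable version. I expect this measurability technicality to be the main obstacle, rather than any inequality.

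Finally, the Catoni specialization follows by taking $\phi=\lambda(\Risk-\Empirical_S)$. For each fixed $w$, Hoeffding's lemma applied to the $n$ independent terms $\tfrac{\lambda}{n}(\Risk(w)-\ell(w,z_i))$, each with range of length $\lambda B/n$, gives $\E_S[e^{\phi(w,S)}]\le e^{\lambda^2B^2/(8n)}$. Exchanging $\E_S$ and $\E_P$ by Tonelli, inserting this into \eqref{eq:germain}, and dividing by $\lambda$ gives the stated inequality.
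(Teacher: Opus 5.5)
Your proposal is correct and follows essentially the route the paper points to (it defers the proof to the cited references, noting it rests on Donsker--Varadhan). For (i) you combine Donsker--Varadhan, derived via Jensen, with Markov's inequality on the data-independent quantity $\E_{w\sim P}[e^{\phi(w,S)}]$; for (ii) you apply Markov under the joint law of $(S,w)$ to $e^{\phi - \log(dQ_S/dP)}$, which is the argument of the cited disintegrated reference; your Hoeffding specialization is also right, and the case split on $\KL(Q_S\|P)=\infty$ and the jointly measurable density obtained from the joint Radon--Nikodym derivative are the right technical points to check.
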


\begin{remark}
The proof relies on Donsker--Varadhan's variational formula for KL divergence;
see the cited references for details. The disintegrated form
\eqref{eq:rivasplata} sharpens \eqref{eq:germain} by avoiding the average over
$Q_S$ at the cost of being a statement over the joint distribution
$(S, w\sim Q_S)$.
\end{remark}

\subsection{From single hypotheses to random sets}\label{sec:random-sets}

We now describe the extension to data-dependent hypothesis \emph{sets},
following \citet{dupuis2024uniform}. While formally a direct consequence of
classical PAC-Bayes (the parameter space is replaced by a space of subsets),
the construction is specifically designed to handle the data-dependent
geometries inherent in modern deep learning.

Let $\mathcal{P}(\R^d)$ denote the power set of $\R^d$. Fix a collection
$\Eset \subseteq \mathcal{P}(\R^d)$ of admissible hypothesis sets, equipped
with a $\sigma$-algebra $\Esigma$. We replace the random parameter
$w \in \R^d$ by a random hypothesis set $\W \in \Eset$.

\begin{definition}[Priors and posteriors on random sets]
\label{def:rand-priors}
A \emph{prior} $P$ is a data-independent probability distribution on
$(\Eset, \Esigma)$. A \emph{family of posteriors} $(Q_S)_{S \in \Zspace^n}$
is a Markov kernel on $\Eset \times \Zspace^n$, with $Q_S \ll P$ almost surely
under $\mu_z^{\otimes n}$.
\end{definition}

The random-set version of Theorem~\ref{thm:classical-pb} follows by replacing
$w$ with $\W$ throughout the proof.

\subsection{The master bound}\label{sec:master-bound}

\begin{theorem}[\citealp{dupuis2024uniform}, PAC-Bayes on random sets]
\label{thm:master}
Let $\Phi : \Eset \times \Zspace^n \to \R$ be measurable. Under the conditions
of Definition~\ref{def:rand-priors}, for any $\zeta \in (0,1)$:
\begin{enumerate}[label=(\roman*), leftmargin=2.5em]
\item (\textbf{Expectation form}) With probability at least $1-\zeta$,
\begin{equation}\label{eq:master-exp}
  \E_{\W \sim Q_S}[\Phi(\W,S)]
  \;\leq\;
  \KL(Q_S \| P) + \log\tfrac{1}{\zeta}
  + \log \E_S \E_{\W\sim P}\!\bigl[e^{\Phi(\W,S)}\bigr].
\end{equation}
\item (\textbf{Disintegrated form}) With probability at least $1-\zeta$,
\begin{equation}\label{eq:master-dis}
  \Phi(\W,S)
  \;\leq\;
  \log\tfrac{dQ_S}{dP}(\W) + \log\tfrac{1}{\zeta}
  + \log \E_S \E_{\W\sim P}\!\bigl[e^{\Phi(\W,S)}\bigr].
\end{equation}
\end{enumerate}
Both hold whenever the expectations are well defined.
\end{theorem}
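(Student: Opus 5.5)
The plan is to rerun the standard PAC-Bayes argument on the measurable space $(\Eset,\Esigma)$ in place of $\W$: Donsker--Varadhan for the expectation form and a direct Radon--Nikodym change of measure for the disintegrated form. Markov's inequality over $S$ supplies the high-probability statement in both cases. Throughout I write $\xi := \E_S\E_{\W\sim P}[e^{\Phi(\W,S)}]$, assume $\xi<\infty$ (otherwise both bounds are trivial), and assume $\Phi(\cdot,S)\in L^1(Q_S)$ as in Theorem~\ref{thm:classical-pb}.

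\textbf{Step 1 (a high-probability event over $S$).} Define $Y(S) := \E_{\W\sim P}[e^{\Phi(\W,S)}]$. By Tonelli on $\Eset\times\Zspace^n$, $Y$ is a nonnegative measurable function of $S$. Joint measurability of $\Phi$ with respect to $\Esigma\otimes\mathcal{F}^{\otimes n}$ is what makes this work, and $P$ is data-independent. Also $\E_S[Y]=\xi$. Markov's inequality then gives $\Prob(Y(S) > \xi/\zeta)\le\zeta$.

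\textbf{Step 2 (expectation form).} For every fixed $S$ with $Q_S\ll P$, Donsker--Varadhan on $(\Eset,\Esigma)$ gives
\[
\E_{\W\sim Q_S}[\Phi(\W,S)] \le \KL(Q_S\|P) + \log Y(S).
\]
This is the usual duality, which can be proved via Jensen: $\E_{Q_S}[\Phi - \log\tfrac{dQ_S}{dP}] \le \log \E_{Q_S}[e^{\Phi}\tfrac{dP}{dQ_S}] \le \log Y(S)$. On the event of Step 1, $\log Y(S)\le \log\xi+\log(1/\zeta)$, which yields \eqref{eq:master-exp}. The event $Q_S\ll P$ fails only on a $\mu_z^{\otimes n}$-null set, so it costs no probability.

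\textbf{Step 3 (disintegrated form).} Here the randomness is over the pair $(S,\W)$ with $\W\sim Q_S$. Set
\[
X(S,\W) := \exp\Bigl(\Phi(\W,S) - \log\tfrac{dQ_S}{dP}(\W)\Bigr),
\]
with the convention $X:=0$ where the density vanishes. For the change of measure I would first choose a jointly measurable version of $(S,\W)\mapsto \tfrac{dQ_S}{dP}(\W)$. Then
\[
\E_S\E_{\W\sim Q_S}[X] = \E_S\E_{\W\sim P}\bigl[e^{\Phi}\mathbf 1\{dQ_S/dP>0\}\bigr] \le \xi .
\]
Markov's inequality on $X$ at level $\xi/\zeta$ gives \eqref{eq:master-dis} with probability at least $1-\zeta$ over $(S,\W\sim Q_S)$.

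\textbf{Main obstacle.} The only non-routine point is measurability on the abstract space $(\Eset,\Esigma)$. This needs (a) a jointly measurable density $(S,\W)\mapsto dQ_S/dP(\W)$ for a Markov kernel, and (b) measurability of the integrands in $S$. For (a) I would try the standard result on measurable Radon--Nikodym derivatives for kernels dominated by a fixed measure, which applies when $\Esigma$ is countably generated. Otherwise I would add that as a hypothesis, which is consistent with ``whenever the expectations are well defined'' and with Assumption~\ref{ass:sup-meas} in the intended application $\Phi=\lambda\GS(\W)$. Everything else transfers verbatim from the proof of Theorem~\ref{thm:classical-pb}.
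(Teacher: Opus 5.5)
Your proposal is correct and follows the same route as the paper, which obtains Theorem~\ref{thm:master} by rerunning the classical PAC-Bayes proof of Theorem~\ref{thm:classical-pb} on $(\Eset,\Esigma)$ with $w$ replaced by $\W$: Donsker--Varadhan plus Markov's inequality over $S$ for the expectation form, and a change of measure plus Markov's inequality over $(S,\W\sim Q_S)$ for the disintegrated form. Your explicit treatment of measurability goes beyond what the paper states, which silently absorbs these points into ``whenever the expectations are well defined.'' This covers Tonelli for $S\mapsto\E_{\W\sim P}[e^{\Phi(\W,S)}]$ and the need for a jointly measurable version of $dQ_S/dP$, e.g.\ via countable generation of $\Esigma$.
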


The remaining theorems specialize $\Phi$ and bound the moment-generating
term to obtain generalization guarantees in terms of concrete complexity
measures.

\subsection{Specialization: data-dependent Rademacher and covering bounds}

A symmetrization argument applied to \eqref{eq:master-exp}--\eqref{eq:master-dis}
gives a Rademacher-type bound on random sets without any boundedness
assumption on the loss \citep[][Theorem 4]{dupuis2024uniform}. Under
Assumption~\ref{ass:bounded}, this further specializes to a clean
data-dependent Rademacher bound:

\begin{theorem}[\citealp{dupuis2024uniform}, data-dependent Rademacher]
\label{thm:dd-rademacher}
Under Assumptions~\ref{ass:bounded} and \ref{ass:sup-meas}, for any $\lambda>0$,
with probability at least $1-\zeta$:
\begin{equation}\label{eq:dd-rad-exp}
  \E_{\W\sim Q_S}[\GS(\W)]
  \;\leq\;
  \E_{\W\sim Q_S}\bigl[2\Rad_S(\W)\bigr]
  + \frac{\KL(Q_S\|P) + \log(1/\zeta)}{\lambda}
  + \lambda\,\frac{9B^2}{8n},
\end{equation}
and the disintegrated counterpart with $\log\frac{dQ_S}{dP}(\W)$ in place of
$\KL(Q_S\|P)$.
\end{theorem}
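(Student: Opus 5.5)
We apply Theorem~\ref{thm:master} with
\[
  \Phi(\W,S) := \lambda\bigl(\GS(\W) - 2\Rad_S(\W)\bigr),
\]
which is jointly measurable by Assumption~\ref{ass:sup-meas}. The supremum in $\GS$ is of the form covered there with $a_i=-1/n$, $b=-1$. The Rademacher term is an average over the finitely many sign vectors $\sigma$ of suprema of the same form with $a_i=\sigma_i/n$, $b=0$. Dividing \eqref{eq:master-exp} by $\lambda$ gives
\[
  \E_{\W\sim Q_S}[\GS(\W)] \leq \E_{\W\sim Q_S}[2\Rad_S(\W)] + \frac{\KL(Q_S\|P)+\log(1/\zeta)}{\lambda} + \frac{1}{\lambda}\log \E_S\E_{\W\sim P}\bigl[e^{\Phi(\W,S)}\bigr].
\]
The disintegrated counterpart follows from \eqref{eq:master-dis} in the same way. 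It therefore suffices to show that, for every fixed $\W$ (the prior $P$ is data-independent, so we may use Fubini to swap $\E_S$ and $\E_{\W\sim P}$),
\[
  \E_S\bigl[e^{\lambda(\GS(\W)-2\Rad_S(\W))}\bigr] \leq e^{\lambda^2 9B^2/(8n)}.
\]

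For this, set $h(S) := \GS(\W) - 2\Rad_S(\W)$ with $\W$ fixed. First, changing one $z_i$ changes $\GS(\W)$ by at most $B/n$, since each summand of $\Empirical_S$ moves by at most $B/n$ and suprema inherit this. It likewise changes $\Rad_S(\W)$ by at most $B/n$, so $h$ has bounded differences $c_i = 3B/n$. The standard McDiarmid/Hoeffding-lemma MGF bound then gives
\[
  \E_S\bigl[e^{\lambda(h-\E h)}\bigr] \leq \exp\Bigl(\lambda^2\sum_i c_i^2/8\Bigr) = \exp\bigl(9\lambda^2B^2/(8n)\bigr).
\]
Second, by classical symmetrization (ghost sample plus Rademacher signs), $\E_S[\GS(\W)] \leq 2\E_S[\Rad_S(\W)]$, so $\E h \leq 0$. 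Combining the two, $\E_S[e^{\lambda h}] \leq e^{\lambda \E h} e^{9\lambda^2B^2/(8n)} \leq e^{9\lambda^2B^2/(8n)}$. Taking $\log$, dividing by $\lambda$, and noting that the bound is uniform in $\W$ so that it survives $\E_{\W\sim P}$, yields the term $\lambda\,9B^2/(8n)$.

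The main obstacle is handling the two random quantities jointly. Applying McDiarmid separately to $\GS$ and $\Rad_S$ and then taking a union bound would cost a factor in $\zeta$ and a worse constant. Instead, the bounded-difference constant must be computed for the combined function $h$; this is what produces the factor $9 = (1+2)^2$, and I would check it carefully. A secondary technical point is that symmetrization requires measurability of the suprema over a fixed but possibly uncountable $\W$, which is again supplied by Assumption~\ref{ass:sup-meas}. The case of infinite $\KL$ is trivial.
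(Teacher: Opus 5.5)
Your proposal is correct and follows the route the paper sketches: apply Theorem~\ref{thm:master} with $\Phi=\lambda\bigl(\GS(\W)-2\Rad_S(\W)\bigr)$, then control the moment-generating term by symmetrization ($\E_S h\le 0$) together with the McDiarmid MGF bound for the joint bounded-difference constant $3B/n$, which is exactly where $9B^2/(8n)$ comes from. There are two small slips: since $b\,\Risk(w)$ sits inside $\sum_{i=1}^n$ in Assumption~\ref{ass:sup-meas}, the coefficients that produce $\GS$ are $a_i=b=-1/n$, not $b=-1$; and the ghost-sample supremum in symmetrization involves $2n$ points, so it is not literally covered by that assumption, though an outer-expectation argument (or a $2n$-point version of the assumption) handles it.
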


When $\ell$ is also Lipschitz, replacing $\Rad_S$ by a covering-number
argument gives an explicit geometric bound.

\begin{theorem}[\citealp{dupuis2024uniform}, covering bound on random sets]
\label{thm:dd-covering}
Under Assumptions~\ref{ass:bounded}, \ref{ass:sup-meas}, and \ref{ass:meas-cover},
suppose $\ell$ is $L$-Lipschitz in $w$ and $\W$ is $P$-a.s.\ bounded. There
exists $C > 0$ such that for any $\lambda, \delta > 0$, with probability at
least $1-\zeta$:
\begin{equation}\label{eq:dd-cover}
  \GS(\W)
  \;\leq\;
  2L\delta + 2B\sqrt{\frac{2\log|N_\delta(\W)|}{n}}
  + \frac{\log\frac{dQ_S}{dP}(\W) + \log(1/\zeta)}{\lambda}
  + \frac{C\lambda B^2}{n}.
\end{equation}
\end{theorem}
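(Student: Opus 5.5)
The plan is to decompose $\GS(\W)$ into a deterministic (given the net) discretization term, a data-dependent "net term" handled by the disintegrated master bound (Theorem~\ref{thm:master}, part (ii)), and a residual. Fix $\delta>0$. For each admissible $\W\in\Eset$, take a minimal $\delta$-cover $N_\delta(\W)$; Assumption~\ref{ass:meas-cover} makes $\W\mapsto|N_\delta(\W)|$ measurable. Since $\ell$ is $L$-Lipschitz in $w$, both $\Risk$ and $\Empirical_S$ are $L$-Lipschitz. Hence for every $w\in\W$ with nearest center $c\in N_\delta(\W)$,
\[
\Risk(w)-\Empirical_S(w)\le \Risk(c)-\Empirical_S(c)+2L\delta .
\]
This gives $\GS(\W)\le 2L\delta+\GS(N_\delta(\W))$. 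It therefore suffices to bound the gap over the finite (random) net.

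Next I will apply the disintegrated bound \eqref{eq:master-dis} with
\[
\Phi(\W,S)=\lambda\Bigl(\GS(N_\delta(\W))-2B\sqrt{2\log|N_\delta(\W)|/n}\Bigr).
\]
This $\Phi$ is measurable by Assumptions~\ref{ass:sup-meas} and~\ref{ass:meas-cover}; the supremum over the net is a supremum over a subset of $\W$, so one may alternatively route measurability through a measurable selection of centres. Dividing the resulting inequality by $\lambda$ produces exactly the terms $(\log\frac{dQ_S}{dP}(\W)+\log(1/\zeta))/\lambda$. What remains is to show
\[
\log \E_S\E_{\W\sim P}\, e^{\Phi(\W,S)}\le C\lambda^2B^2/n .
\]
The key point is that $P$ is data-independent. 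So I can swap to $\E_{\W\sim P}\E_S$ by Fubini, and for each fixed $\W$ the net is a fixed finite set, independent of $S$.

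For fixed $\W$ with $N:=|N_\delta(\W)|$, the variable $X:=\GS(N_\delta(\W))$ is a maximum of $N$ centred averages of $[0,B]$-valued i.i.d.\ terms. I will bound its mean and fluctuation separately.
\begin{itemize}
\item Each term is $B^2/(4n)$-subgaussian by Hoeffding. The finite-class maximal inequality then gives $\E_S X\le B\sqrt{\log N/(2n)}$, which is below $2B\sqrt{2\log N/n}$.
\item $X$ has bounded differences $B/n$ in each $z_i$. McDiarmid's inequality therefore gives $\E e^{\lambda(X-\E X)}\le e^{\lambda^2B^2/(8n)}$.
\end{itemize}
Combining the two, $\E_S e^{\Phi(\W,S)}\le e^{\lambda^2B^2/(8n)}$ uniformly in $\W$, so integrating against $P$ yields the moment bound with $C=1/8$. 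Extra slack in the constant $C$ and in the factor $2B\sqrt{2\cdot}$ can absorb a symmetrization route through Theorem~\ref{thm:dd-rademacher}, which uses $9B^2/8n$, if one prefers to reuse that result. That route would be to apply Massart's finite-class lemma, $\Rad_S(N)\le B\sqrt{2\log N/n}$, together with $2\Rad_S$, which explains the stated form $2B\sqrt{2\log|N_\delta|/n}$.

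I expect the main obstacle to be the measurability and uniformity bookkeeping rather than the concentration. First, the net must be chosen as a measurable function of $\W$ so that $\Phi$ is $\Esigma\otimes\mathcal F^{\otimes n}$-measurable. Second, the constant $C$ must be uniform in $\W$ even though $\W$ is only $P$-a.s.\ bounded; a bounded $\W$ is needed so that $N<\infty$ almost surely. I would handle the first by allowing any measurable selection of an approximate minimal cover, since the bound is monotone in $|N_\delta|$. The second is automatic, because the McDiarmid constant does not depend on $N$.
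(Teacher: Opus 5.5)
Your proof is correct, but it is organized differently from the paper's.

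The paper (following \citet{dupuis2024uniform}) obtains Theorem~\ref{thm:dd-covering} from the disintegrated form of Theorem~\ref{thm:dd-rademacher}. There the covering enters only \emph{after} the PAC-Bayes step, through the pointwise inequality $\Rad_S(\W)\le L\delta+\Rad_S(N_\delta(\W))\le L\delta+B\sqrt{2\log|N_\delta(\W)|/n}$, i.e.\ Lipschitz discretization plus Massart's lemma. This is where the form $2L\delta+2B\sqrt{2\log|N_\delta(\W)|/n}$ and a constant $C=9/8$ come from. So the route you mention only as an aside is the paper's route.

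You instead discretize \emph{before} invoking Theorem~\ref{thm:master}(ii) and bound the exponential moment directly. Because the prior is data-independent, the net is a fixed finite set, and Hoeffding's maximal inequality plus McDiarmid suffice. Your route needs no symmetrization and gives sharper constants: $C=1/8$, and the complexity term could even be $B\sqrt{\log|N_\delta(\W)|/(2n)}$. The paper's route is modular, since any bound on $\Rad_S$ can be plugged in afterwards. Its $\Phi$ also involves only suprema over $\W$ itself, so measurability follows immediately from Assumption~\ref{ass:sup-meas}.

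That measurability point is the one soft spot in your write-up. Assumption~\ref{ass:sup-meas} controls suprema over sets in $\Eset$, not over an arbitrarily chosen cover $N_\delta(\W)$. Nothing in the hypotheses provides a measurable selection of covers, so your claim that $\Phi$ is measurable is not justified as stated.

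You do not need a selection at all. Take $\Phi(\W,S):=\lambda\bigl(\GS(\W)-2L\delta-2B\sqrt{2\log|N_\delta(\W)|/n}\bigr)$. This is jointly measurable by Assumption~\ref{ass:sup-meas} (with $a_i=b=-1/n$) together with Assumption~\ref{ass:meas-cover}. Then use $\GS(\W)-2L\delta\le\GS(N_\delta(\W))$ only inside $\E_S$ for a fixed $\W$, where any minimal cover will do. Equivalently, apply McDiarmid directly to $\GS(\W)$, which also has bounded differences $B/n$. With this change the rest of your argument goes through unchanged.
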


The bound exhibits an explicit complexity-stability trade-off in $\lambda$:
small $\lambda$ favors complexity, large $\lambda$ favors stability.
Under the additional uniformity Assumption~\ref{ass:uniformity}, taking
$\delta \to 0$ at the right rate replaces the covering number by a
data-dependent fractal dimension.

\begin{corollary}[\citealp{dupuis2023fractal, dupuis2024uniform},
data-dependent fractal dimension]\label{cor:dd-fractal}
Under Assumptions~\ref{ass:bounded}--\ref{ass:uniformity}, there exist $C>0$
and, for any $\lambda, \epsilon, \gamma > 0$, an $n_{\gamma,\epsilon}$ such
that for $n \geq n_{\gamma,\epsilon}$, with probability at least
$1 - \zeta - \gamma$:
\begin{equation}\label{eq:dd-fractal}
  \GS(\W)
  \;\leq\;
  \frac{2}{n}
  + 2B\sqrt{\frac{2(\dimB^{\vartheta_S}(\W)+\epsilon)\log(n)}{n}}
  + \frac{\log\frac{dQ_S}{dP}(\W) + \log(1/\zeta)}{\lambda}
  + \frac{C\lambda B^2}{n}.
\end{equation}
\end{corollary}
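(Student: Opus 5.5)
The plan is to derive Corollary~\ref{cor:dd-fractal} from Theorem~\ref{thm:dd-covering}. The idea is to take a scale $\delta=\delta_n$ that shrinks with $n$ and to replace $\log|N_{\delta_n}(\W)|$ by $(\dimB^{\vartheta_S}(\W)+\epsilon)\log(1/\delta_n)$. This replacement is justified on an event whose probability we control through Assumption~\ref{ass:uniformity}.

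First, I rerun the covering step of Theorem~\ref{thm:dd-covering} in the data-dependent pseudometric $\vartheta_S$ instead of the Euclidean one. Concretely, $\vartheta_S(w,w')$ should control $\frac1n\sum_i|\ell(w,z_i)-\ell(w',z_i)|$ together with the corresponding population term. With that choice the discretization error $2L\delta$ becomes $2\delta$ without invoking Lipschitzness. If one prefers to stay with the Euclidean statement, the same argument goes through with $2L\delta$ and $\vartheta_S$ replaced by the Euclidean metric. I then pick $\delta_n=1/n$. This turns the discretization term into $2/n$, and the factor $\log(1/\delta_n)$ becomes $\log n$, which matches \eqref{eq:dd-fractal}.

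Second, I handle the approximation event. Fix $\epsilon,\gamma>0$. By Assumption~\ref{ass:uniformity} there is a $\delta_0>0$ such that, uniformly in $n$,
\[
\int Q_S\Bigl(\sup_{0<r<\delta_0}\tfrac{\log|N^{\vartheta_S}_r(\W)|}{\log(1/r)}-\dimB^{\vartheta_S}(\W)\ge\epsilon\Bigr)\,d\mu_z^{\otimes n}(S)\le\gamma .
\]
Take $n_{\gamma,\epsilon}$ with $1/n_{\gamma,\epsilon}<\delta_0$. For $n\ge n_{\gamma,\epsilon}$, outside a joint $(S,\W\sim Q_S)$-event of probability at most $\gamma$, we have $\log|N^{\vartheta_S}_{1/n}(\W)|\le(\dimB^{\vartheta_S}(\W)+\epsilon)\log n$. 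The constant $C$ is the one from Theorem~\ref{thm:dd-covering}, so it is independent of $\epsilon$ and $\gamma$.

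Third, I combine the two events by a union bound. The disintegrated bound of Theorem~\ref{thm:dd-covering} fails with probability at most $\zeta$ over $(S,\W\sim Q_S)$, and the approximation event fails with probability at most $\gamma$. Hence \eqref{eq:dd-fractal} holds with probability at least $1-\zeta-\gamma$. Measurability of these events comes from Assumptions~\ref{ass:sup-meas} and~\ref{ass:meas-cover}.

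The main obstacle is the first step. Theorem~\ref{thm:dd-covering} is stated with a fixed scale $\delta$, and the covering argument must hold in the random pseudometric $\vartheta_S$. The net then depends on $S$, so Hoeffding's inequality cannot be applied to fixed net points. I would handle this inside the PAC-Bayes machinery of Theorem~\ref{thm:master}. There, the moment-generating term is bounded for each fixed $\W$ under the prior, by passing through the Rademacher bound of Theorem~\ref{thm:dd-rademacher} and Massart's finite-class lemma on the $\vartheta_S$-net. This yields the $2B\sqrt{2\log|N|/n}$ term with a net that is allowed to depend on $S$, since Rademacher complexity is conditional on $S$. Because $\delta_n$ is deterministic, no union bound over scales is required.
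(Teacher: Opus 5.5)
Your proposal is correct and follows the paper's route. You specialize the covering bound at $\delta = 1/n$, use Assumption~\ref{ass:uniformity} to replace $\log|N^{\vartheta_S}_{1/n}(\W)|$ by $(\dimB^{\vartheta_S}(\W)+\epsilon)\log n$ outside a joint event of probability at most $\gamma$, and union-bound with the $\zeta$-event. Your way of getting the Lipschitz-free $2/n$ term is the right one: apply Theorem~\ref{thm:dd-rademacher} as a black box, then bound $\Rad_S(\W)\le \delta + B\sqrt{2\log|N^{\vartheta_S}_\delta(\W)|/n}$ for each fixed $S$ and $\W$ via Massart's lemma. This needs $\vartheta_S$ to dominate only the empirical $L^1$ distance, not the population term, and Massart enters after the moment-generating-function step rather than inside it.
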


\section{Refining the geometry: topological complexity}\label{sec:topology}

The bounds of \S\ref{sec:pac-bayes} control the generalization gap through
covering numbers or fractal dimensions of $\WS$. These quantities are
informative but largely geometry-blind: they treat the trajectory as a
generic compact set and ignore its internal structure. Following
\citet{andreeva2024topological}, we now refine the complexity term using
two quantities adapted to point-cloud trajectories: the $\alpha$-weighted
lifetime sum and the positive magnitude.

\subsection{From covering numbers to topological descriptors}

Theorem~\ref{thm:dd-covering} gives a bound in terms of $|N_\delta(\WS)|$.
Specialized to a discrete-time trajectory $W_{t_0\to T} = \{w_{t_0}, \dots,
w_T\}$, and with the introduction of a mutual-information term to handle the
data-dependence of $\WS$, this gives a covering-form trajectory bound.

\begin{proposition}[\citealp{andreeva2024topological}, trajectory covering]
\label{prop:trajectory-cover}
Under Assumption~\ref{ass:bounded} and assuming $\ell$ is $(q, L, \rho)$-Lipschitz
for $q \geq 1$, for all $\delta > 0$, with probability at least $1-\zeta$
over $\mu_z^{\otimes n} \otimes \mu_u^{\otimes \infty}$:
\begin{equation}
  \sup_{t_0 \leq i \leq T} \GS(w_i)
  \;\leq\;
  2L\delta + 2B\sqrt{\frac{2\log N^\rho_\delta(W_{t_0\to T})}{n}}
  + 3B\sqrt{\frac{I_\infty(S, W_{t_0\to T}) + \log(1/\zeta)}{2n}}.
\end{equation}
\end{proposition}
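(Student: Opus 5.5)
The plan is to split the argument into a deterministic discretization step and a probabilistic step. The probabilistic step is a Rademacher-type bound on the finite random set, made uniform over data-dependent sets by a change of measure with $I_\infty$. First fix $\delta>0$ and let $N = N^\rho_\delta(W_{t_0\to T})$ be a minimal $\delta$-cover of the trajectory in the pseudometric $\rho$ (Definition~\ref{def:covering}). Since $\ell$ is $(q,L,\rho)$-Lipschitz, every $w_i$ has a center $c\in N$ with $\rho(w_i,c)\le\delta$. Then both $|\Risk(w_i)-\Risk(c)|$ and $|\Empirical_S(w_i)-\Empirical_S(c)|$ are at most $L\delta$; for $q>1$ one uses Jensen to pass from the $L^q$-type control to these averages. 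This gives
\[
\sup_{t_0\le i\le T}\GS(w_i)\le 2L\delta+\GS(N),
\]
so it remains to bound the gap over the finite, data-dependent set $N$.

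For a \emph{fixed} finite set $N$ with $\ell\in[0,B]$, I would use Theorem~\ref{thm:rademacher}. Together with Massart's finite-class lemma, $\Rad_S(N)\le B\sqrt{2\log|N|/n}$, it yields with probability $1-\zeta'$:
\[
\GS(N)\le 2B\sqrt{2\log|N|/n}+3B\sqrt{\log(1/\zeta')/(2n)}.
\]
The bound is deterministic in the size of $N$ except through the McDiarmid tail. Let $E$ be the bad event, as a subset of (data, set) pairs. Then $E$ has probability at most $\zeta'$ under the product law $\Prob_S\otimes\Prob_{W}$, because for every fixed trajectory the tail bound holds.

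The decoupling step uses the definition of $I_\infty$ (Definition~\ref{def:itinfo}): for any measurable $E$,
\[
\Prob_{S,W}(E)\le e^{I_\infty(S,W_{t_0\to T})}\,\Prob_S\otimes\Prob_W(E).
\]
Choosing $\zeta'=\zeta e^{-I_\infty}$ makes the joint probability of $E$ at most $\zeta$. Substituting $\log(1/\zeta')=I_\infty+\log(1/\zeta)$ into the McDiarmid term gives exactly the third term of the stated bound. The Massart term $2B\sqrt{2\log|N|/n}$ is evaluated at the random cover; this is legitimate because, for each fixed set, the Rademacher bound holds with that set's own cardinality. Measurability of $E$ is needed for the change of measure, and that is where supremum and cover measurability (Assumptions~\ref{ass:sup-meas}, \ref{ass:meas-cover}) would be invoked.

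I expect the main obstacle to be making the change of measure rigorous. The cover $N$ is a function of the trajectory, which depends on $S$ and on $U\sim\mu_u^{\otimes\infty}$, so it is not obvious that the event on which the fixed-set bound fails is jointly measurable. One must also check that the data processing inequality lets the information term be stated for $W_{t_0\to T}$ rather than for the cover. I would handle this by defining $E$ directly in terms of the trajectory, with $N$ a measurable function of it, so that $I_\infty(S,W_{t_0\to T})$ is the relevant quantity. The Lipschitz discretization with general $q$ is routine by comparison.
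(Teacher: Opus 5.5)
Your $I_\infty$ change of measure and your Massart constants are right, but you discretize the generalization gap \emph{before} concentrating, and that step fails under the hypothesis as \citet{andreeva2024topological} use it. There, $(q,L,\rho)$-Lipschitz continuity is a condition on the loss vector on the sample, $\bigl(\frac{1}{n}\sum_{j=1}^n|\ell(w,z_j)-\ell(w',z_j)|^q\bigr)^{1/q}\le L\rho(w,w')$. It is meant to cover data-dependent pseudometrics such as $\rho_S(w,w')=\frac{1}{n}\sum_{j}|\ell(w,z_j)-\ell(w',z_j)|$ with $q=L=1$, for which no Lipschitz assumption in $w$ is needed. The power-mean inequality gives you $|\Empirical_S(w_i)-\Empirical_S(c)|\le L\delta$. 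Nothing gives $|\Risk(w_i)-\Risk(c)|\le L\delta$: two parameters can have identical losses on the $n$ training points, hence $\rho_S$-distance $0$, while their population risks differ by up to $B$. So your first display, $\sup_i\GS(w_i)\le 2L\delta+\GS(N)$, is unjustified. Moreover, when $\rho=\rho_S$ the cover depends on $S$, so it is not a fixed finite class once the trajectory is fixed, and the fixed-set Rademacher step also fails. Your argument is complete only under the stronger reading where $\rho$ is data-independent and the Lipschitz bound holds for every $z$.

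The repair is to discretize inside the empirical Rademacher complexity instead of in the gap. For each fixed trajectory, apply Theorem~\ref{thm:rademacher} to the finite set $W_{t_0\to T}$ itself and decouple with $I_\infty$ exactly as you propose. This gives, with probability at least $1-\zeta$,
\[
  \sup_{t_0\le i\le T}\GS(w_i)\le 2\Rad_S(W_{t_0\to T})+3B\sqrt{\frac{I_\infty(S,W_{t_0\to T})+\log(1/\zeta)}{2n}}.
\]
Then bound $\Rad_S(W_{t_0\to T})$ deterministically for every realization. If $\rho(w_i,c)\le\delta$, then $\frac{1}{n}\sum_j\sigma_j\bigl(\ell(w_i,z_j)-\ell(c,z_j)\bigr)\le\frac{1}{n}\sum_j|\ell(w_i,z_j)-\ell(c,z_j)|\le L\delta$. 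Hence $\Rad_S(W_{t_0\to T})\le L\delta+\Rad_S(N)\le L\delta+B\sqrt{2\log|N|/n}$ by Massart, and doubling yields the statement. The $2L\delta$ comes from the factor $2$ in front of $\Rad_S$, not from discretizing $\Risk$ and $\Empirical_S$ separately. This route uses only the sample-level condition and works verbatim for data-dependent $\rho$. It also removes the measurability issue you raise: the decoupled event involves only $\sup_i\GS(w_i)$ and $\Rad_S(W_{t_0\to T})$, and the cover enters only a pointwise deterministic inequality. So you need neither a measurable selection of $N$ nor a data-processing argument. The same skeleton gives Theorems~\ref{thm:lifetime} and~\ref{thm:pmag}, with $\Rad_S$ bounded through $E^\rho_\alpha$ or $\PMag^\rho$ instead of a cover.
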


Replacing the covering number by a topological descriptor of the trajectory
gives the next two main theorems.

\subsection{$\alpha$-weighted lifetime sums}

\begin{figure}[ht]
  \centering
  \includegraphics[width=0.85\textwidth]{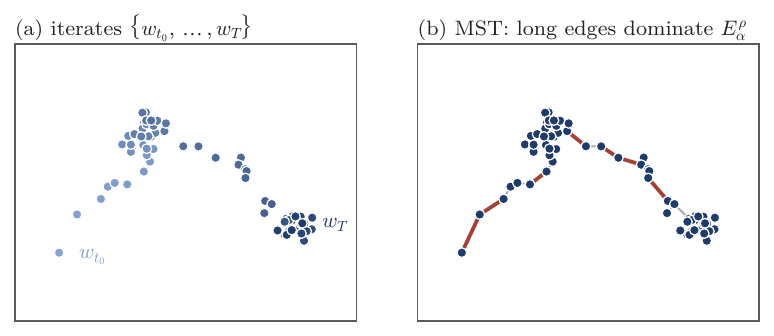}
  \caption{Trajectory geometry. (a) A simulated SGD trajectory in $\R^2$
  alternates between fast transitions and clustered phases; dot color
  encodes the iterate index $t_0,\dots,T$. (b) The minimum spanning tree
  on the same point cloud, with the top decile of edge lengths
  highlighted: a few long edges concentrate most of the mass of
  $E^\rho_\alpha = \sum_{e \in T} |e|^\alpha$, so trajectories that
  cluster contribute little.}
  \label{fig:trajectory}
\end{figure}

\begin{theorem}[\citealp{andreeva2024topological}, lifetime-sum bound]
\label{thm:lifetime}
Under Assumption~\ref{ass:bounded}, suppose $\ell$ is $(q,L,\rho)$-Lipschitz
for $q \geq 1$. For all $\alpha \in [0,1]$, with probability at least $1-\zeta$:
\begin{equation}\label{eq:lifetime}
  \sup_{t_0 \leq i \leq T} \GS(w_i)
  \;\leq\;
  2B\sqrt{\frac{2\log\bigl(1 + K_{n,\alpha}\,E^\rho_\alpha(W_{t_0\to T})\bigr)}{n}}
  + \frac{2B}{\sqrt{n}}
  + 3B\sqrt{\frac{I_\infty(S, W_{t_0\to T}) + \log(1/\zeta)}{2n}},
\end{equation}
where $K_{n,\alpha} := 2\bigl(2L\sqrt{n}/B\bigr)^\alpha$.
\end{theorem}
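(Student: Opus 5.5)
The plan is to derive Theorem~\ref{thm:lifetime} from Proposition~\ref{prop:trajectory-cover} by choosing the scale $\delta$ well and then bounding the covering number $N^\rho_\delta(W_{t_0\to T})$ by the $\alpha$-weighted lifetime sum. The information-theoretic term $3B\sqrt{(I_\infty + \log(1/\zeta))/(2n)}$ carries over unchanged, since it does not depend on $\delta$. Only the two covering-related terms need work.

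\textbf{Step 1: choice of scale.} Take $\delta := B/(L\sqrt{n})$. Then the discretization term is $2L\delta = 2B/\sqrt{n}$, which is exactly the middle term of \eqref{eq:lifetime}. It remains to show
\[
  N^\rho_\delta(W_{t_0\to T}) \;\leq\; 1 + 2\left(\frac{2}{\delta}\right)^{\alpha} E^\rho_\alpha(W_{t_0\to T}) .
\]
With this $\delta$ we have $2/\delta = 2L\sqrt{n}/B$, so the right-hand side is $1 + K_{n,\alpha}E^\rho_\alpha$, as in the statement.

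\textbf{Step 2: covering bound from the MST.} This is the main combinatorial step. Let $T$ be a minimum spanning tree on the finite set $X = W_{t_0\to T}$. Delete from $T$ every edge of length greater than $\delta$; let $m$ be the number of deleted edges, which leaves $m+1$ connected components. Since each deleted edge satisfies $|e|^\alpha \geq \delta^\alpha$, we get $m \leq \delta^{-\alpha}E^\rho_\alpha(X)$.

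Each component is connected by edges of length at most $\delta$, but its diameter can still be large, so a component does not automatically fit in one $\delta$-ball. The fix is to split each component along its tree. Perform a walk that traverses each edge twice, and cut the walk whenever the accumulated length since the last cut would exceed $\delta$. Each resulting piece lies in a ball of radius $\delta$ around its first point. The number of pieces is controlled by the total length of the short edges, and for edges with $|e| \leq \delta$ and $\alpha \leq 1$ we have $|e|/\delta \leq (|e|/\delta)^\alpha$. Hence the pieces number at most $2\delta^{-\alpha}E_\alpha$ plus one per component.

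Combining the two counts gives $N_\delta \leq 1 + c\,\delta^{-\alpha}E_\alpha$. Working with $\delta/2$ instead of $\delta$, to allow the triangle inequality when centering balls, produces the constant $2\cdot 2^\alpha$ appearing in $K_{n,\alpha}$. I expect the bookkeeping in this step, namely getting exactly the constant $2(2/\delta)^\alpha$, to be the main obstacle. I would first try a greedy argument: pick centers that are pairwise more than $\delta$ apart. Each such center forces MST path length greater than $\delta$ between centers, and the contribution of each center can then be charged to MST edges with weight at least $(\delta/2)^\alpha$.

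\textbf{Step 3: substitution.} Plug the covering bound into Proposition~\ref{prop:trajectory-cover} and use monotonicity of $\log$ to obtain \eqref{eq:lifetime}.

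There is one measurability subtlety. The quantity $E^\rho_\alpha$ is random, but the scale $\delta$ is deterministic, since it depends only on $n$, $L$ and $B$. So Proposition~\ref{prop:trajectory-cover} applies at this fixed $\delta$, and the covering bound of Step 2 is a deterministic inequality that holds pointwise. No union bound over scales is needed.
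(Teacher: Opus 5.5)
Your architecture matches the intended one. The survey gives no proof (it defers to \citet{andreeva2024topological}), but just before the statement it says the theorem follows from Proposition~\ref{prop:trajectory-cover} by replacing the covering number with the lifetime sum. Your Steps~1 and~3 do exactly this: the deterministic scale $\delta = B/(L\sqrt{n})$ turns $2L\delta$ into $2B/\sqrt{n}$ and $2(2/\delta)^\alpha$ into $K_{n,\alpha}$, and no union bound over scales is needed. So everything rests on the lemma $N^\rho_\delta \le 1 + 2(2/\delta)^\alpha E^\rho_\alpha$, which you prove by cutting an Euler tour of the short-edge forest. That device works, but your bookkeeping as written does not reach the stated constant, and your suggested repair is the wrong one. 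Combined literally, your two counts ($m \le \delta^{-\alpha}E^\rho_\alpha$, and at most $2\delta^{-\alpha}E^\rho_\alpha$ pieces plus one per component) give $N^\rho_\delta \le 1 + 3\delta^{-\alpha}E^\rho_\alpha$. Since $3 \le 2\cdot 2^\alpha$ fails for $\alpha < \log_2(3/2)$, this does not prove the statement for small $\alpha$. Rescaling to $\delta/2$ only multiplies your constant by $2^\alpha$. It therefore yields $2\cdot 2^\alpha$ only if you already had $2$ at scale $\delta$, in which case you are done without it. Nor is it needed for centering: the walk length from the first point of a piece dominates $\rho$ by the triangle inequality, so each piece already lies in the closed $\delta$-ball around that point.

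The fix is to charge the two counts to disjoint sets of MST edges. The long edges pay for the extra components, $m \le \delta^{-\alpha}\sum_{|e|>\delta}|e|^\alpha$. In a component $C$, consecutive piece-starts are more than $\delta$ apart along a tour of length $2\sum_{e\in C}|e|$. Hence $C$ yields at most $1 + 2\delta^{-1}\sum_{e\in C}|e| \le 1 + 2\delta^{-\alpha}\sum_{e\in C}|e|^\alpha$ pieces, using $|e| \le \delta$ and $\alpha \le 1$. Summing over the $m+1$ components,
\[
  N^\rho_\delta(W_{t_0\to T})
  \;\le\; 1 + \delta^{-\alpha}\sum_{|e|>\delta}|e|^\alpha
  + 2\delta^{-\alpha}\sum_{|e|\le\delta}|e|^\alpha
  \;\le\; 1 + 2\delta^{-\alpha}E^\rho_\alpha(W_{t_0\to T}),
\]
and $2\delta^{-\alpha} \le 2(2/\delta)^\alpha$ because $2^\alpha \ge 1$. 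The factor $2^\alpha$ in $K_{n,\alpha}$ is slack that you do not need to reproduce.

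Your fallback greedy sketch leaves the multiplicity of the charging unexplained, but it has a clean form with the same constant. Take a maximal $P \subset W_{t_0\to T}$ with pairwise distances greater than $\delta$; it is a $\delta$-cover. Every spanning tree of $P$ has edges longer than $\delta$, so $E^\rho_\alpha(P) \ge (|P|-1)\delta^\alpha$. Shortcutting a doubled traversal of the MST of $W_{t_0\to T}$, using subadditivity of $t\mapsto t^\alpha$ for $\alpha \le 1$, gives $E^\rho_\alpha(P) \le 2E^\rho_\alpha(W_{t_0\to T})$. Your tour-cutting builds the cover explicitly, while the packing version is shorter and shows that the factor $2$ is just the double traversal.
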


The bound replaces the (algorithm-agnostic) covering number by
$E^\rho_\alpha$, which directly tracks the topological complexity of the
trajectory. The intuition is that closely-clustered iterates contribute
short edges to the MST and thus a small lifetime sum.

\subsection{Positive magnitude}

A complementary, scale-sensitive quantity is the positive magnitude.
While covering numbers and lifetime sums measure ``how many points cover the
trajectory at scale $\delta$'', the magnitude $\Mag^\rho(s\W)$ measures the
\emph{effective number of points} of $\W$ as seen at the resolution
controlled by $s$. The positive part $\PMag^\rho$ is a robust variant
introduced by \citet{andreeva2024topological}.

\begin{figure}[ht]
  \centering
  \includegraphics[width=0.95\textwidth]{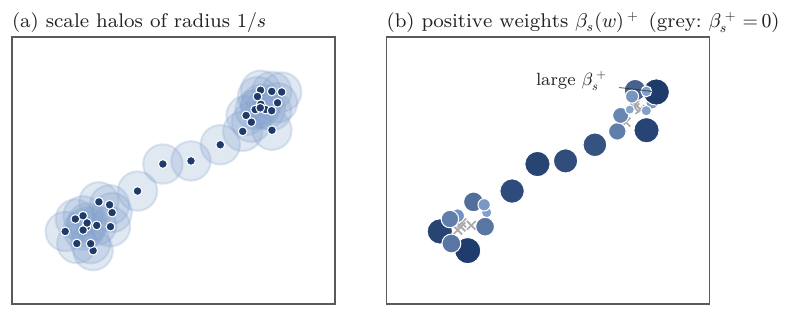}
\caption{Positive magnitude on a simulated trajectory ($s = 3$).
(a) The optimization trajectory viewed at scale $s$. The circles
illustrate the interaction distance $1/s$: iterates in clusters heavily
overlap, whereas transition points remain isolated.
(b) The size and color intensity of each iterate encode its positive
weight $\beta_s(w_i)^+$. Clustered iterates ``shadow'' each other and
receive near-zero weights, contributing little to $\PMag^\rho$.
Conversely, isolated iterates along the transition path act as
independent effective points and receive large weights.}
  \label{fig:magnitude}
\end{figure}

\begin{theorem}[\citealp{andreeva2024topological}, positive-magnitude bound]
\label{thm:pmag}
Suppose the loss $\ell$ is $(q,L,\rho)$-Lipschitz for $q \geq 1$ and that
$(\W, \lambda\rho)$ admits a positive magnitude for every $\lambda > 0$.
For any $s > 0$, with probability at least $1-\zeta$:
\begin{equation}\label{eq:pmag}
  \sup_{t_0 \leq i \leq T} \GS(w_i)
  \;\leq\;
  \frac{2}{s}\log \PMag^\rho(Ls\,W_{t_0\to T})
  + \frac{sB^2}{n}
  + 3B\sqrt{\frac{I_\infty(S, W_{t_0\to T}) + \log(1/\zeta)}{2n}}.
\end{equation}
\end{theorem}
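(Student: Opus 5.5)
We prove Theorem~\ref{thm:pmag} in two stages. First we bound the expected worst-case gap over the trajectory when the trajectory is data-independent. Then we pay for the data dependence with the total mutual information $I_\infty$. The structure is the one behind Proposition~\ref{prop:trajectory-cover} and Theorem~\ref{thm:lifetime}. The only change is the complexity step: we use a Laplace-transform (Chernoff) bound on the supremum, and magnitude enters through its variational characterization instead of through covering numbers.

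\textbf{Step 1 (fixed set, Rademacher reduction).} Fix a finite set $W=\{w_{t_0},\dots,w_T\}$ that does not depend on $S$. By symmetrization, as in Theorem~\ref{thm:rademacher}, it suffices to control $\E_\sigma \sup_{w\in W} \frac1n\sum_i \sigma_i \ell(w,z_i)$ at fixed $S$. For $s>0$ we write
\[
\E_\sigma \sup_{w\in W} X_w \;\le\; \frac{1}{\lambda}\log \E_\sigma \sup_{w\in W} e^{\lambda X_w},
\]
where $X_w=\frac1n\sum_i\sigma_i\ell(w,z_i)$. We will take $\lambda$ proportional to $s n$, with constants chosen to produce the $2/s$ and $sB^2/n$ factors.

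\textbf{Step 2 (bounding the exponential supremum by positive magnitude).} This is the main obstacle. Let $\beta$ be a weighting of $(W, Ls\rho)$. For each $a\in W$ we have $1=\sum_b e^{-Ls\rho(a,b)}\beta(b)\le \sum_b e^{-Ls\rho(a,b)}\beta(b)^+$. Multiplying by $e^{\lambda X_a}$ gives
\[
\sup_a e^{\lambda X_a}\le \sup_a\sum_b \beta(b)^+ e^{\lambda X_a - Ls\rho(a,b)}.
\]
The $(q,L,\rho)$-Lipschitz assumption controls the increment $|X_a - X_b|$ by $L\rho(a,b)$, up to the $\ell^q$ averaging. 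With $\lambda$ matched to $s$, this yields $e^{\lambda X_a - Ls\rho(a,b)}\lesssim e^{\lambda X_b}$, and hence
\[
\sup_a e^{\lambda X_a}\le \sum_b \beta(b)^+ e^{\lambda X_b}.
\]
Taking $\E_\sigma$ and applying Hoeffding's lemma to each $X_b$ gives $\E_\sigma e^{\lambda X_b}\le e^{\lambda^2B^2/(2n)}$. Therefore
\[
\E_\sigma\sup_{w\in W} X_w\le \frac1\lambda\log\PMag^\rho(LsW)+\frac{\lambda B^2}{2n}.
\]
After symmetrization (factor $2$) and the choice of $\lambda$ in terms of $s$, this becomes the first two terms of \eqref{eq:pmag}.

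The delicate point is the matching $\lambda\leftrightarrow s$ when the Lipschitz control is only in the $(q,\rho)$ sense, that is, on the empirical average rather than pointwise in $z$. If the direct pointwise comparison fails, we will first try to apply the argument to $\E_\sigma$ of the random variables. If that does not work, we will use the monotonicity of $\PMag$ under scaling, which follows from the assumption that $(\W,\lambda\rho)$ admits a positive magnitude for all $\lambda$, and absorb constants into the scale.

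\textbf{Step 3 (concentration and data dependence).} For fixed $W$, McDiarmid's inequality applied to $\GS(W)$, which has bounded differences $B/n$, gives a deviation of $B\sqrt{\log(1/\zeta)/(2n)}$ around the expectation. The expectation is bounded by twice the Rademacher term of Step 2 plus a symmetric McDiarmid correction; these combine into the factor $3B$, as in Theorem~\ref{thm:rademacher}. To let $W=W_{t_0\to T}$ depend on $(S,U)$, we apply the standard decoupling lemma: for any event $A$,
\[
\Prob_{S,W}(A)\le e^{I_\infty(S,W)}\,\Prob_S\otimes\Prob_W(A).
\]
We apply this to the failure event. Under the product measure, $W$ is independent of $S$, so Steps 1--3 apply with confidence $\zeta e^{-I_\infty}$. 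This replaces $\log(1/\zeta)$ by $I_\infty(S,W_{t_0\to T})+\log(1/\zeta)$ and yields \eqref{eq:pmag}. This is the same mechanism as in Proposition~\ref{prop:trajectory-cover}.
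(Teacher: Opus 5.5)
Your proposal follows essentially the same route as the proof behind this cited result (the survey itself gives none). That route is an $I_\infty$-decoupled Rademacher bound with constant $3B$, plus a Chernoff bound on $\Rad_S$: the weighting identity at scale $Ls$ and Lipschitzness give $\sup_a e^{sX_a}\le\sum_b\beta(b)^+e^{sX_b}$, followed by Hoeffding's lemma and the symmetrization factor $2$.

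Drop the remark $\lambda\propto sn$ and take $\lambda=s$ exactly; no constant multiple of $sn$ can produce the $2/s$ and $sB^2/n$ terms. Under the normalized convention $\bigl(\frac1n\sum_i|\ell(a,z_i)-\ell(b,z_i)|^q\bigr)^{1/q}\le L\rho(a,b)$, which is the one the stated constants require, the power-mean inequality for $q\ge1$ gives $|X_a-X_b|\le L\rho(a,b)$ for every sign vector. So the comparison in Step 2 is exact, and your fallbacks are not needed, including the unjustified claim that $\PMag$ is monotone in the scale.
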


Theorems~\ref{thm:lifetime} and \ref{thm:pmag} share the same residual
mutual-information term $I_\infty(S, W_{t_0\to T})$. This term is what
\S\ref{sec:stability} eliminates.

\section{Removing intractable terms: algorithmic stability}\label{sec:stability}

The bounds of \S\ref{sec:pac-bayes}--\S\ref{sec:topology} all carry a
data-dependence term: $\KL(Q_S\|P)$ for the PAC-Bayesian bounds and
$I_\infty(S, \WS)$ for the topological ones. Both are well-defined
information-theoretic (IT) quantities, but in practice they are essentially
intractable for discrete-time SGD or Adam on deep networks. For continuous
Langevin dynamics one can control them analytically; for the optimizers
actually used in the field, no usable bound is known. The bounds of
\S\ref{sec:topology} are therefore theoretically clean but practically
opaque.

Algorithmic stability \citep{bousquet2002stability, bassily2020stability}
offers a way out: it characterizes the algorithm itself, rather than its
information-theoretic interaction with $S$. The key insight of
\citet{tuci2026stability} is to lift the standard notion of uniform argument
stability (Assumption~\ref{ass:uas}) to a property of the entire random set
$\WSU$ (Assumption~\ref{ass:traj}). Under
Assumptions~\ref{ass:bounded}, \ref{ass:traj}, and \ref{ass:lip-rs}, the IT
term can then be replaced by a stability constant $\beta_n$.

\subsection{IT-free fractal-dimension bound}

\begin{theorem}[\citealp{tuci2026stability}, IT-free fractal bound]
\label{thm:it-free-fractal}
Under Assumptions~\ref{ass:bounded}, \ref{ass:traj}, and \ref{ass:lip-rs},
suppose $\WSU$ is almost surely of finite diameter, and (without loss of
generality) that $\beta_n^{-2/3}$ is an integer divisor of $n$. There exists
$\delta_n > 0$ such that for all $\delta < \delta_n$,
\begin{equation}
  \E\!\left[\sup_{w \in \WSU}\bigl(\Risk(w) - \Empirical_S(w)\bigr)\right]
  \;\leq\;
  2\,\E\!\left[
    \tfrac{B}{n}
    + \delta L_{S,U}
    + \beta_n^{1/3}\!\left(1 + B\sqrt{4\,\dimB^{\vartheta_S}(\WSU)\log\tfrac{1}{\delta}}\right)
  \right].
\end{equation}
\end{theorem}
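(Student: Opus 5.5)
The plan is to combine a blocking (sample-splitting) argument with trajectory stability, and then use a covering argument on each block, in the spirit of Theorem~\ref{thm:dd-covering} but without any prior/posterior. Let $k := \beta_n^{-2/3}$, which by assumption divides $n$. Partition $S$ into $k$ blocks of size $m = n/k$. For a fixed block $j$, replace the points of the other blocks by an independent ghost copy; the resulting dataset $S^{(j)}$ differs from $S$ in $J = n - m$ points. That is too many, so the blocks should be used the other way round. Instead, let $S^{(j)}$ replace only block $j$ by independent fresh points, so $J = m = n\beta_n^{2/3}$, and write $\Empirical_S(w) = \frac1k\sum_j \Empirical_{S_j}(w)$. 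I would first write the supremum as being attained, up to $B/n$, by a data-dependent selection $\omega(\WSU,S)$. This measurable near-maximizer is what produces the $B/n$ term.

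\textbf{Step 1 (decoupling via stability).} For each block $j$, Assumption~\ref{ass:traj} yields a map $\omega'$ that transports $\omega(\WSU,S)$ to a point $w'_j \in \mathcal W_{S^{(j)},U}$. This set is independent of block $j$ given the other blocks. The transport changes the loss at any $z$ by at most $\beta_n J$ in $U$-expectation. Since both $\Risk$ and $\Empirical_{S_j}$ are averages of losses, the per-block gap at $\omega$ is bounded by the gap at $w'_j$ plus $2\beta_n J$ divided by the normalization. The normalization has to be chosen so that this contributes $O(\beta_n^{1/3})$. Concretely, I would apply trajectory stability per point and average, so the error is $\beta_n \cdot (\text{number of swapped points})/m$ per swapped point. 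With the balance $k = \beta_n^{-2/3}$, this term becomes $\beta_n^{1/3}$. Getting this bookkeeping right (including the factor $2$) is mechanical once the coupling is fixed.

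\textbf{Step 2 (covering on an independent set).} Conditionally on everything except block $j$, $\mathcal W_{S^{(j)},U}$ is a fixed set, and $S_j$ is an i.i.d.\ sample of size $m$ independent of it. Cover it at scale $\delta$ using the data-dependent pseudometric $\vartheta_S$ and Assumption~\ref{ass:lip-rs}. The discretization error is then $\delta L_{S,U}$ on each side, which gives the $2\delta L_{S,U}$ term. On the finite net, Hoeffding together with a maximal inequality gives an expected supremum of at most $B\sqrt{2\log|N_\delta|/m}$. Since $1/\sqrt m = \sqrt{k/n}$, and by the divisibility choice $k/n$ combines with the stability term into $\beta_n^{1/3}$, this produces the factor $\beta_n^{1/3}B\sqrt{\cdot}$.

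\textbf{Step 3 (from covering numbers to dimension).} Because $\WSU$ has a.s.\ finite diameter, the definition of the upper box dimension gives, for $\delta$ below a threshold $\delta_n$, the bound $\log|N_\delta| \le 2\,\dimB^{\vartheta_S}(\WSU)\log(1/\delta)$. The slack factor $2$ absorbs the limsup, which explains the constant $4$ under the root. Averaging over $j$ and taking expectations then gives the stated bound.

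\textbf{Main obstacle.} I expect the main obstacle to be Step 1. It requires the measurable coupling between $\omega$ on $\WSU$ and $\omega'$ on the perturbed set, together with checking that $\mathcal W_{S^{(j)},U}$ is genuinely independent of $S_j$ so that Step 2 applies. A closely related difficulty is choosing the block size so that the stability cost and the concentration cost balance exactly at $\beta_n^{1/3}$. If the simple balance fails, I would instead optimise $m$ directly between $\beta_n m$ and $1/\sqrt m$, which again yields the $\beta_n^{1/3}$ rate.
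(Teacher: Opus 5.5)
Your architecture is the right one. You swap a block for ghost points, use Assumption~\ref{ass:traj} to move the near-maximizer into $\mathcal{W}_{S^{(j)},U}$, which is independent of $S_j$, and then run a covering plus maximal-inequality argument on that set with the points of $S_j$. The survey states the theorem without proof, but this is what the shape of the bound encodes. The gap is in the bookkeeping: you have the two block parameters backwards, so the identities claimed in Steps~1--2 are false.

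Trajectory stability must be applied once per block with \emph{all} of $S_j$ swapped, because the transported point has to land in a set independent of the whole block. Swapping one point at a time does not give that. The cost is therefore $2\beta_n|S_j|$: once for $\Risk$ and once for $\Empirical_{S_j}$, using that the assumption holds for every $z$. No normalization divides this cost down. The concentration cost on the net is of order $B\sqrt{\log|N_\delta|/|S_j|}$.

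With your choice $|S_j| = m = n\beta_n^{2/3}$, these costs become $2n\beta_n^{5/3}$ and $B\beta_n^{-1/3}n^{-1/2}\sqrt{\log|N_\delta|}$. Both are $O(\beta_n^{1/3})$ only when $\beta_n \asymp n^{-3/4}$. For the typical $\beta_n\asymp 1/n$, the second is $n^{-1/6}$ instead of $n^{-1/3}$. In particular, $\sqrt{k/n}$ does not ``combine'' into $\beta_n^{1/3}$.

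The fix is what your fallback describes but your setup contradicts: the \emph{block size} is $J=\beta_n^{-2/3}$, with $n/J$ blocks. Then $2\beta_n J = 2\beta_n^{1/3}$ produces the ``$1$'' term and $1/\sqrt{J}=\beta_n^{1/3}$ produces the root term. Averaging $\Empirical_S = \frac{J}{n}\sum_j \Empirical_{S_j}$ over blocks leaves both unchanged.

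Two further points need repair. In Step~2, the cover must not depend on $S_j$. A cover taken in a pseudometric built from $S$ (as $\vartheta_S$ suggests) has centers depending on $S_j$, which breaks the independence the maximal inequality needs. Moreover, Assumption~\ref{ass:lip-rs} controls discretization only in the Euclidean norm. Your argument also produces $L_{S^{(j)},U}$ and $|N_\delta(\mathcal{W}_{S^{(j)},U})|$ rather than the $(S,U)$ quantities, so you need $(S^{(j)},U)\overset{d}{=}(S,U)$ to transfer them back.

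In Step~3, the limsup in the definition of $\dimB$ gives a threshold that depends on the realization $(S,U)$, not a deterministic $\delta_n$. To obtain a deterministic threshold you must pass to the limit inside the expectation. For example, apply monotone or dominated convergence to $\E\bigl[\sqrt{\sup_{r<\delta}\log|N_r|/\log(1/r)}\bigr]$; this is where the finite-diameter hypothesis naturally enters.
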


\subsection{IT-free topological bounds}

\begin{theorem}[\citealp{tuci2026stability}, IT-free topological bounds]
\label{thm:it-free-topo}
Under Assumptions~\ref{ass:bounded}, \ref{ass:traj}, and \ref{ass:lip-rs},
assume that $\WSU$ is almost surely finite and that $\beta_n^{-2/3}$ is an
integer divisor of $n$. Then for any $\alpha \in (0,1]$,
\begin{equation}\label{eq:it-free-lifetime}
  \E\!\left[\sup_{w \in \WSU}\bigl(\Risk(w) - \Empirical_S(w)\bigr)\right]
  \;\leq\;
  \beta_n^{1/3}\!\left(2 + 2B + 2B\,\E\!\sqrt{2\log\bigl(1 + K_{n,\alpha} E_\alpha(\WSU)\bigr)}\right),
\end{equation}
where $K_{n,\alpha} := 2(2L_{S,U}\sqrt{n}/B)^\alpha$. Moreover, for any
$\lambda > 0$, with $s(\lambda) := \lambda L_{S,U}\beta_n^{-1/3}/B$,
\begin{equation}\label{eq:it-free-pmag}
  \E\!\left[\sup_{w \in \WSU}\bigl(\Risk(w) - \Empirical_S(w)\bigr)\right]
  \;\leq\;
  \beta_n^{1/3}\!\left(2 + \lambda B + \tfrac{2B}{\lambda}\,
    \E\!\bigl[\log \PMag(s(\lambda)\cdot \WSU)\bigr]\right).
\end{equation}
\end{theorem}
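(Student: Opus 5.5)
We follow the strategy of \citet{tuci2026stability}: split the data into blocks, use trajectory stability (Assumption~\ref{ass:traj}) to decouple the random set from one block of fresh data, and then apply a \emph{fixed-set} concentration argument on that block. The topological quantities enter only at that last stage, through the same chaining and magnitude arguments behind Theorems~\ref{thm:lifetime} and~\ref{thm:pmag}. Set $J := \beta_n^{-2/3}$ and split $S$ into $n/J$ blocks of size $J$, which is possible since $J$ is an integer divisor of $n$.

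\textbf{Step 1 (decoupling).} Let $\omega(\WSU,S)$ be a measurable selection that (nearly) attains $\sup_{w\in\WSU}(\Risk(w)-\Empirical_S(w))$. For a block $S_k$ of $J$ points, let $S^{(k)}$ be $S$ with $S_k$ replaced by an independent ghost copy $\tilde S_k$. Assumption~\ref{ass:traj} gives a map $\omega'$ with $\omega'(\mathcal W_{S^{(k)},U},\cdot)\in\mathcal W_{S^{(k)},U}$ whose losses differ from those of $\omega$ by at most $\beta_n J$ in $\E_U$, pointwise in $z$. This controls both the population risk and the empirical risk on any points. Hence the expected gap is bounded by $2\beta_n J$ plus the expected gap of a point of $\mathcal W_{S^{(k)},U}$, evaluated on $S_k$, which is now independent of that set. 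The empirical risk on the complement blocks needs separate handling, for instance by averaging over $k$ and using the symmetry of the blocks, so that the full empirical risk is an average of per-block risks. This yields
\[
\E[\GS(\WSU)] \le 2\beta_n J + \E\Bigl[\sup_{w\in \mathcal W'}\bigl(\Risk(w)-\widehat{\mathcal R}_{S_k}(w)\bigr)\Bigr],
\]
where $\mathcal W'$ is independent of the $J$ fresh points $S_k$.

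\textbf{Step 2 (fixed-set bound on $J$ points).} Conditionally on $\mathcal W'$, we are back in the data-independent setting with sample size $J$ and no $I_\infty$ term. For \eqref{eq:it-free-lifetime} we reuse the proof of Theorem~\ref{thm:lifetime} with $I_\infty=0$, in expectation form. This gives a Massart/finite-class bound $2B\sqrt{2\log N_\delta/J}$ plus discretization error $2L_{S,U}\delta$. The covering number of a finite set is then bounded by $1+K\,E_\alpha$ via the MST argument, since an $\alpha$-weighted lifetime sum controls the number of MST edges longer than $\delta$. Choosing $\delta$ of order $B/(L\sqrt J)$ gives terms of order $B/\sqrt J=B\beta_n^{1/3}$. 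For \eqref{eq:it-free-pmag} we repeat the proof of Theorem~\ref{thm:pmag}: a Hoeffding MGF bound $e^{s^2B^2/(8J)}$, combined with the magnitude weighting to control $\E\sup$, gives $\tfrac{1}{s}\log\PMag+sB^2/J$. Substituting $s=\lambda\sqrt J/B$, rescaled by $L_{S,U}$ so that $s(\lambda)=\lambda L_{S,U}\beta_n^{-1/3}/B$, turns each term into $\beta_n^{1/3}$ times $\lambda B$ or $\tfrac{2B}{\lambda}\log\PMag$.

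\textbf{Step 3 (balancing).} Adding the two steps gives $2\beta_n J + O(B/\sqrt J)$, and $J=\beta_n^{-2/3}$ makes both terms of order $\beta_n^{1/3}$. This accounts for the leading constants $2$ and $2B$. The main obstacle is Step~1. The sup is attained at a data-dependent point, and the stability map $\omega'$ has to be applied jointly to the population and block-empirical terms. The empirical risk on the remaining $n-J$ points also has to be handled, for which we would try a leave-one-block symmetrization. Finally, the random Lipschitz constant $L_{S,U}$ inside $K_{n,\alpha}$ and $s(\lambda)$ must refer to the perturbed set; we would handle this by conditioning on $(S^{(k)},U)$ before applying concentration to $S_k$.
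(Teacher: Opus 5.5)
The survey quotes this theorem from \citet{tuci2026stability} without proof, so there is no in-paper argument to compare against. On its own merits your block argument is correct and reproduces the statement term by term; the hypothesis that $\beta_n^{-2/3}$ divides $n$ encodes exactly your block structure. With $J=\beta_n^{-2/3}$, stability costs $2\beta_nJ=2\beta_n^{1/3}$. Conditionally on $(S^{(k)},U)$, the choice $\delta=B/(L\sqrt J)$ gives $2L\delta=2B\beta_n^{1/3}$ and a Massart term $2B\beta_n^{1/3}\sqrt{2\log(1+2(L\sqrt J/B)^\alpha E_\alpha)}$, with $2(L\sqrt J/B)^\alpha\le K_{n,\alpha}$. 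In \eqref{eq:it-free-pmag}, the scale $s(\lambda)$ and the factor $2B/\lambda$ come out exactly once you use that $w\mapsto\Risk(w)-\Empirical_{S_k}(w)$ is $2L$-Lipschitz on the set. The magnitude scale is then $2L$ times the MGF parameter $\lambda\sqrt J/(2B)$; your sketch with MGF parameter $\lambda\sqrt J/B$ is off by this factor $2$.

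\textbf{Step~1 is not an obstacle.} Since $\WSU$ is finite, take $w^*$ an exact maximizer and use the identity
$\GS(\WSU)=\frac{J}{n}\sum_k\bigl(\Risk(w^*)-\Empirical_{S_k}(w^*)\bigr)$.
Assumption~\ref{ass:traj} with $S'=S^{(k)}$ holds for every $z$. It therefore transfers $\Risk$ (by Fubini) and $\Empirical_{S_k}$ (whose points are just particular $z$'s) to $w'_k\in\mathcal W_{S^{(k)},U}$, at cost $\beta_nJ$ each. Since $(S^{(k)},U,S_k)$ has the law of $(S,U,\tilde S)$, with $\tilde S$ an independent $J$-sample, all block terms are equal. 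No symmetrization over the remaining $n-J$ points is needed.

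\textbf{The covering-number justification is wrong as stated.} The number of MST edges longer than $\delta$ only counts connected components at scale $\delta$. A single component, such as a long chain of short edges, can need many balls. The correct argument also charges the short edges:
within each component of the MST with long edges removed, a maximal $\delta$-separated set has at most $1+2\Lambda_j/\delta$ points, where $\Lambda_j$ is the component's total edge length. One then uses $|e|/\delta\le(|e|/\delta)^\alpha$ for $|e|\le\delta$; this is precisely where $\alpha\le 1$ enters. The result is $|N_\delta(X)|\le 1+2E_\alpha(X)/\delta^\alpha$, with cover centers in $X$, as Assumption~\ref{ass:lip-rs} requires.
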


The two bounds \eqref{eq:it-free-lifetime}--\eqref{eq:it-free-pmag} are
IT-free counterparts of Theorems~\ref{thm:lifetime} and \ref{thm:pmag}: the
$I_\infty$ term has been absorbed into the stability constant $\beta_n$
through Assumption~\ref{ass:traj}, while preserving the topological complexity
in $E_\alpha$ and $\PMag$.

\section{Synthesis and conclusion}\label{sec:synthesis}

Across PAC-Bayesian theory, fractal geometry, persistent homology, and
algorithmic stability, a single template inequality~\eqref{eq:template}
governs all the bounds we have surveyed. Table~\ref{tab:bounds} makes the
contributions of each line of work directly comparable along five axes:
the setting (fixed $\W$, random $\WS$, algorithm-dependent $\WSU$), the
complexity term, the IT term, the type of guarantee (uniform, expectation,
disintegrated), and the assumptions invoked. Every term that vanishes from
one row to the next signals a real conceptual or practical gain.

\begin{table}[t]
\label{tab:bounds}
\centering
\footnotesize
\setlength{\tabcolsep}{4pt}
\renewcommand{\arraystretch}{1.15}
\resizebox{\textwidth}{!}{%
\begin{tabular}{@{}lllllll@{}}
\toprule
Theorem & Setting & Complexity & IT term & Type & Assumptions \\
\midrule
Th.\,\ref{thm:minkowski} \citep{simsekli2020hausdorff}
  & fixed $\W$, finite diam.\ & $\dim_M(\W)$ & -- & uniform & A1 \\
Th.\,\ref{thm:rademacher} \citep{mohri2018foundations}
  & fixed $\W$ & $\Rad_S(\W)$ & -- & uniform & A1 \\
\midrule
Th.\,\ref{thm:classical-pb}\,(i,ii) \citep{germain2009pac}
  & fixed $\W$, randomized $Q_S$ & abstract $\phi$ & $\KL$ / $\log\frac{dQ_S}{dP}$
  & exp.\ / disint. & -- \\
Th.\,\ref{thm:master} \citep{dupuis2024uniform}
  & random $\W$ & abstract $\Phi$ & $\KL$ / $\log\frac{dQ_S}{dP}$
  & exp.\ / disint. & A2 \\
Th.\,\ref{thm:dd-rademacher} \citep{dupuis2024uniform}
  & random $\W$ & $\Rad_S(\W)$ & $\KL$ / $\log\frac{dQ_S}{dP}$
  & exp.\ / disint. & A1, A2 \\
Th.\,\ref{thm:dd-covering} \citep{dupuis2024uniform}
  & random $\W$ & $\log|N_\delta(\W)|$ & $\log\frac{dQ_S}{dP}$
  & disint. & A1--A3 \\
Cor.\,\ref{cor:dd-fractal} \citep{dupuis2023fractal, dupuis2024uniform}
  & random $\W$ & $\dimB^{\vartheta_S}(\W)$ & $\log\frac{dQ_S}{dP}$
  & disint. & A1--A4 \\
\midrule
Prop.\,\ref{prop:trajectory-cover} \citep{andreeva2024topological}
  & trajectory & $\log N^\rho_\delta(W_{t_0\to T})$ & $I_\infty$
  & high-prob.\ & A1, Lipschitz \\
Th.\,\ref{thm:lifetime} \citep{andreeva2024topological}
  & trajectory & $E^\rho_\alpha(W_{t_0\to T})$ & $I_\infty$
  & high-prob.\ & A1, Lipschitz \\
Th.\,\ref{thm:pmag} \citep{andreeva2024topological}
  & trajectory & $\PMag^\rho(Ls\cdot W_{t_0\to T})$ & $I_\infty$
  & high-prob.\ & A1, Lipschitz \\
\midrule
Th.\,\ref{thm:it-free-fractal} \citep{tuci2026stability}
  & $\WSU$ & $\dimB^{\vartheta_S}(\WSU)$ & \textbf{none} & expectation & A1, A6, A7 \\
Th.\,\ref{thm:it-free-topo} \citep{tuci2026stability}
  & $\WSU$ & $E_\alpha(\WSU)$, $\PMag(\cdot\,\WSU)$ & \textbf{none} & expectation & A1, A6, A7 \\
\bottomrule
\end{tabular}}
\caption{Comparison of the bounds discussed in this survey. ``--'' denotes
absence of the term. ``Disint.'' = disintegrated.}
\end{table}

\paragraph{When to use what.}
The reader's choice depends on which terms in
template~\eqref{eq:template} they can control:
\begin{itemize}[leftmargin=2em]
\item \emph{If a tight prior is available}, the PAC-Bayesian
  bounds of \S\ref{sec:pac-bayes} (Theorem~\ref{thm:dd-covering}
  and Corollary~\ref{cor:dd-fractal}) are the sharpest, with
  $\KL(Q_S\|P)$ as the data-dependence term.
\item \emph{If only the trajectory point-cloud is available}, the topological
  bounds of \S\ref{sec:topology} (Theorems~\ref{thm:lifetime}
  and~\ref{thm:pmag}) are the most natural, but require controlling
  $I_\infty$.
\item \emph{If $I_\infty$ cannot be controlled} (e.g., for Adam, ~\cite{kingma2014adam}), the
  stability-based bounds of \S\ref{sec:stability}
  (Theorems~\ref{thm:it-free-fractal} and~\ref{thm:it-free-topo}) are the
  only currently known route, at the price of an algorithm-side stability
  assumption.
\end{itemize}

\acks{This work was conducted as part of the EA research program at \'Ecole
polytechnique under the supervision of Umut \c{S}im\c{s}ekli and Benjamin
Dupuis. We acknowledge the support of INRIA and the resources provided by
\'Ecole polytechnique.}

\bibliography{refs}

\end{document}